\newcommand{\nn}{\nonumber}
\newcommand{\calE}{\mathcal{E}}
\newcommand{\calN}{\mathcal{N}}
\newcommand{\calX}{\mathcal{X}}
\newcommand{\qednew}{\nobreak \ifvmode \relax \else
      \ifdim\lastskip<1.5em \hskip-\lastskip
      \hskip1.5em plus0em minus0.5em \fi \nobreak
      \vrule height0.75em width0.5em depth0.25em\fi}
\newcommand{\scrU}{\mathscr{U}}
\numberwithin{equation}{section}
\newtheorem{theorem}{Theorem}[section]
\newtheorem{proposition}[theorem]{Proposition}
\theoremstyle{remark}
\newtheorem{remark}[theorem]{Remark}
\DeclareMathOperator{\tr}{Tr}
\newcommand{\E}{{\mathbb E}}
\newcommand{\diag}{\operatorname{Diag}}
\newcommand{\sign}{\mathrm{sign}}
\title{Robust Mean Estimation under Quantization}
\begin{document}
\author{
  Pedro Abdalla\thanks{Department of Mathematics, University of California Irvine.}
  \and 
  Junren Chen\thanks{Department of Mathematics,   University of Maryland, College Park.}
}

\maketitle

\begin{abstract}
We consider the problem of mean estimation under quantization and adversarial corruption. We construct multivariate robust estimators that are optimal up to logarithmic factors in two different settings. The first is a one-bit setting, where each bit depends only on a single sample, and the second is a partial quantization setting, in which the estimator may use a small fraction of unquantized data.
\end{abstract}

\section{Introduction}
Parameter estimation under quantization is a fundamental problem at the intersection of statistics \cite{dirksen2022covariance}, signal processing \cite{dirksen2022sharp}, and machine learning \cite{cai2022distributed}. Quantization is of fundamental importance in these fields, mainly due to its role in reducing memory and storage costs.

In distributed learning, quantization plays a key role to reduce the cost of communication between data servers, where the central server has to estimate a parameter from the quantized data sent by the data servers. As pointed out in \cite{kipnis2022mean}, quantization also contributes to the recent paradigm of data privacy, since the quantized sample preserves sensitive information: the estimator only have access to bits, which reduces the chance to leak sensitive information.

In this work, we consider what is arguably the most fundamental parameter estimation task: the estimation of the mean of a random vector $X$ from $n$ i.i.d. copies $X_1,\ldots,X_n$ of $X$. Our goal is to estimate the mean using quantized samples $\dot{X}_1,\cdots,\dot{X}_n$, ideally using the smallest number of bits as possible. Following the terminology from \cite{kipnis2022mean}, we shall make an important distinction about the different settings that appear in the literature:

\begin{itemize}
    \item Centralized: the bits have access to the entire sample, that is, \\
    $\dot{X}_1(X_1,\ldots,X_n),\ldots, \dot{X}_n(X_1,\ldots,X_n)$.
    \item Adaptive: each bit is a function of the actual sample and the previous bits, that is, for every $j\in [n]$, $\dot{X}_j(X_j,\dot{X}_1,\ldots,\dot{X}_{j-1})$.
    \item Distributed:  each bit $\dot{X}_j$ depends only on the sample $X_j$, that is, \\ $\dot{X}_1(X_1),\ldots,\dot{X}_n(X_n)$ \footnote{Some works in the literature about the centralized setting refers to the problem as mean estimation in the distributed setting due to the terminology distributed learning.}.
\end{itemize}

It should be noted that the centralized setting is the least restrictive one. Indeed, in this setting the algorithm is allowed to manipulate unquantized data first and then quantize the output of the mean estimator. This makes the problem much easier than the other settings. On the other hand, the distributed setting is the most restrictive one as the samples are quantized directly without knowing anything about the other samples. Our work focuses  on the distributed setting but might be of interest in  the adaptive and centralized settings as well.

Our first main contribution is the construction of (near) optimal mean estimators for quantized data in the minimax sense. In particular, we show that our estimator can be easily extended to the high-dimensional setting, for which no results are currently available in the literature.

In the spirit of the active area of robust statistics \cite{lugosi2021trimmed,abdalla2024covariance,diakonikolas2023algorithmic}, we also show that our estimators are robust to
to an adversarial noise, meaning that it is accurate even when a small fraction of the data is compromised. To the best of our knowledge, our work is the first to study this practical relevant case of robust mean estimators under quantization. 

Our second main contribution is to propose a (slightly) more relaxed setting, where the algorithm is allowed to use a negligible fraction of unquantized data. Such setting which we shall refer to as the ``partial quantization setting" has the main advantage of avoiding the dependence on the location of the mean, and using almost the same number of bits. The results are reported in a progressive order of difficulty in Sections \ref{sec:fullquan} and \ref{sec:partialquan}, we refer the reader to Theorems \ref{thm:onebit_onedim} and \ref{thm:partial_multivariate_corruption} for the formal statements in the most complete form (multivariate with adversarial corruption).

\subsection*{Related Work}
The design of deterministic quantization maps for unbiased estimators is considered in 
\cite{venkitasubramaniam2007quantization,vempaty2014quantizer,chen2009performance} with focus on optimality with respect to the Fisher's information and Cramer-Rao lower bounds. However, they do not present sample complexity bounds for mean estimators. Our approach is significantly different, as the quantization map is random and has (negligible) bias which avoids the impossibility results from Cramer-Rao theory. Estimators such as the trimmed mean \cite{lugosi2021trimmed} or median of means \cite{lugosi2017lectures} have negligible bias, but they often present faster convergence rates than the optimal unbiased estimators.

Cai and Wei \cite{cai2022distributed,cai2024distributed} constructed minimax optimal estimators for the centralized setting. Their construction also holds for the multivariate case. On the other hand, the estimators are only available in the case that mean vector $\mu$ has all entries bounded by one and it is restricted to the case where $X$ follows a Gaussian distribution. In addition, their approach heavily relies on the fact that in the centralized setting one can compute an estimator for the mean and quantize the estimator's output, which is not suitable approach for the adaptive setting neither for the distributed setting. We remark that their method is very sensitive to bit flips, meaning that if one bit is corrupted then the whole procedure is fully compromised.

Kipnis and Duchi \cite{kipnis2022mean} proved sharp asymptotics with respect to the standard mean-square error (MSE) in the one-dimensional adaptive setting, assuming that the data is log-concave with known density. Kumar and Vatedka \cite{kumar2025one} improved the results of Kipnis and Duchi for the adaptive setting by relaxing the requirement on the knowledge of the variance. Recently,  near optimal non-asymptotic estimates for the adaptive setting were derived by Lau and Scarlett \cite{lau2025sequential} when the mean is bounded. All the approaches do not seem suitable to handle adversarial corruption due to sensitivity to bit flipping. 

Kipnis and Duchi also provided asymptotic lower bounds for the minimax MSE error in the distributed setting under some technical assumptions about the construction of the estimator. The authors claim that the convergence rate of any mean estimator in the distributed setting must depend on the location of the mean. The recent lower bound by Lau and Scarlett leads to the same conclusion. We provide a simple argument to prove this fact.

Suresh, Sun, Ro and Yu \cite{suresh2022correlated} constructed a quantization scheme for the distributed setting with only in-expectation guarantees, and depending on the empirical absolute deviation which is unclear how to control. Their method relies on a randomized quantization as ours but their construction require bounded data and it is limited to the one-dimensional setting.

The following results are somewhat related but do not overlap with our goal. Amiraz, Krauthgamer and Nadler \cite{amiraz2022distributed} considered the problem of estimating the support of $\mu$ assuming that $\mu$ is $K$-sparse with known $K$. Acharya, Canonne, Liu, Sun and Tyagi studied the density estimation \cite{acharya2021distributed} problem under quantization, restricted to discrete probability distributions only.

\subsection*{Organization}
The rest of this work is organized as follows. In Section \ref{sec:background}, we provide some background on robust statistics and dithering that is used in the construction of our estimators. In Section \ref{sec:fullquan}, we derive the main results for the distributed setting both in the noiseless case and in the case with adversarial corruption. In Section \ref{sec:partialquan}, we describe the partial quantization setting and derive the main results related to this setting. In Section \ref{sec:experiments}, we show numerical experiments to corroborate our theory.

\section{Background}
\label{sec:background}
In this section, we describe mathematically the mean estimation problem under the quantization constraint as well the necessary background. Given the sample that consists of $n$ i.i.d. copies $X_1,\ldots,X_n$ of a random vector $X$, our goal is to design a quantization scheme that maps the sample to bits $\dot{X}_1(X_1),\ldots,\dot{X}_n(X_n) \in \{0,1\}^d$, and an estimator $\widehat{\mu}(\dot{X}_1,\ldots,\dot{X}_n)$ for the mean of $X$, which we shall denote by $\mu \coloneqq\E X$.

\subsection{Background on robust statistics and dithering}

Our main idea is a novel connection between the trimmed mean estimator \cite{lugosi2021trimmed} in the robust statistics literature and the (random) dithering technique from the signal processing literature \cite{dirksen2021non,xu2020quantized,thrampoulidis2020generalized}.  

We start by describing the basics of the trimmed mean estimator. To simplify, we assume for now that $X$ is a one-dimensional random variable with mean $\mu$ and variance $\sigma^2$. The extension to the multivariate setting is postponed to the next section.

In a nutshell, the trimmed mean estimator cuts off samples that are too large in absolute value, and averages the remaining samples. That is, for some well-chosen parameters $\alpha$ and $\beta$, we define
\begin{align}
    \phi_{\alpha,\beta}(x) \coloneqq \begin{cases}
        \beta ,\qquad \text{if }x>\beta\\
        x,\qquad \text{if }x\in[\alpha,\beta]\\
        \alpha,\qquad \text{if }x<\alpha, 
    \end{cases}
\end{align}
and compute $$\frac{1}{n}\sum_{i=1}^n \phi_{\alpha,\beta}(X_i).$$

To describe how $\alpha$ and $\beta$ are chosen, we have to recall the notion of quantiles. To this end, we define the $p$-th quantile of $X$ by
\begin{equation}
\label{def:quantile}
Q_{p}(X) \coloneqq \sup\{ a\in \mathbb{R}: \mathbb{P}(X\ge a)\ge 1-p\}.
\end{equation}
Ideally, one would like to choose $\alpha = Q_{\varepsilon}(X)$ and $\beta = Q_{1-\varepsilon}(X)$ for  some small enough $\varepsilon$ to ensure that
\begin{equation*}
    \mathbb{P}(X\notin [\alpha,\beta]) = O(\sigma n^{-1/2}),
\end{equation*}
where the right-hand side is the minimax convergence rate of the (unquantized) mean estimation problem. This means that the bias introduced by the truncation function is negligible. 

Clearly, we do not have knowledge of the population quantiles. A natural approach is to replace by the empirical counterparts, that is, we consider a non-decreasing rearrangement of the data $X_1^{\ast} \le \cdots \le X_{N}^{\ast}$ and choose $\alpha = X_{\varepsilon}^{\ast}$ and $\beta=X_{1-\varepsilon}^{\ast}$. Intuitively, they should be close to $Q_{\varepsilon}(X)$ and $Q_{1-\varepsilon}(X)$, respectively. We postpone the technical details about the accuracy of this choice to the next section.

The dithering technique comes from the signal processing literature in one-bit estimation problems (see for example \cite{dirksen2023tuning,chen2025parameter,dirksen2021non,thrampoulidis2020generalized,dirksen2022covariance,chen2023high}). It is based on the following idea: for any (possibly deterministic) $x$ that is bounded $|x|\le \lambda$, we draw a random variable $\tau$ uniformly distributed over the interval $[-1, 1]$, namely $\tau \sim \scrU[-1,1]$. Next, the following formula is exploited, 
\begin{equation*}
    \E \big[\lambda\sign(x+\lambda\tau)\big] = x. 
\end{equation*}
The choice of $\tau$ guarantees that $\lambda\sign(x+\lambda\tau)$ is unbiased estimator for $x$. We adopt a somewhat different strategy. Inspired by the robust statistics literature, we consider the regime where the dithering may have a (negligible) bias. Indeed, notice that for any $x\in \mathbb{R}$
(not necessarily bounded) and $\lambda>0$, for a random variable $\tau \sim \scrU [-1,1]$,  we have that 
\begin{equation*}
\begin{split}
    &\E\Big[\lambda\sign(x+\lambda\tau)\Big] \\&= \lambda \mathbbm{1}(x>\lambda) -\lambda \mathbbm{1}(x<-\lambda) + \mathbbm{1}(|x|<\lambda)\Big[\lambda \mathbbm{P}(\tau>-x)-\lambda \mathbbm{P}(\tau<-x)\Big]\\
    &= \begin{cases}
       ~ \lambda,\,\qquad\text{if }x>\lambda\\
       ~ x,\qquad \,\text{if }|x|\le \lambda \\
        -\lambda, \qquad \text{if }x<-\lambda,
    \end{cases}
\end{split}
\end{equation*}
where the right-hand side is exactly the trimmed function, therefore
\begin{equation}
\label{eq:trimmed=dithering}
 \E\Big[\lambda\sign(x+\lambda\tau)\Big]=\phi_{-\lambda,\lambda}(x).
\end{equation}
The identity \eqref{eq:trimmed=dithering} actually shows that a biased dithering scheme may be beneficial to cut off outliers. This connection is a fundamental component in our proofs.

To end this section, we introduce the standard notion of adversarial noise that dates back to the work of Huber \cite{huber19721972} and now it is commonly adopted in the statistics and machine learning literature, for example in mean estimation or covariance estimation \cite{lugosi2021trimmed, abdalla2024covariance}. The idea behind adversarial noise is to use it as a quantitative benchmark to study the stability of estimators against imperfect data, which is relevant in practical applications. 

The mathematical description is relatively simple, we observe the data after $\eta n$ samples were arbitrarily changed. We shall make a distinction of two possibilities of adversarial noise in this manuscript.

We refer to pre-quantization noise when the adversary modifies the sample $X_1,\cdots,X_n$ to a corrupted sample $\widetilde{X}_1,\cdots,\widetilde{X}_n$ satisfying 
\begin{equation}
\label{eq:adversarial_noise_pre}
\bigg|\{i\in [n]: \widetilde{X}_i\neq X_i \}\bigg| \le \eta n.
\end{equation}
The noise affects the sample  collected before we apply the quantization procedure. Analogously, we refer to post-quantization noise when the adversarial corruption affects the samples after the quantization map is applied, that is, the quantization map outputs $\widetilde{\dot{X}}_1,\cdots,\widetilde{\dot{X}}_n$ satisfying
\begin{equation}
\label{eq:adversarial_noise_pos}
\bigg|\{i\in [n]: \widetilde{\dot{X}}_i \neq \dot{X}_i \}\bigg| \le \eta n.
\end{equation}

\section{Mean Estimation in the Distributed Setting}\label{sec:fullquan}
  In this section, we derive the main results for mean estimation for the distributed setting. However, we believe that our results are still relevant to the centralized or adaptive settings, specially due to the robustness properties of our estimators. 

To start, we make an important observation about the distributed setting. Unless we make additional assumptions on the distribution of $X$, the estimator $\widehat{\mu}$ is a function of the bits and nothing else. Consequently, there are only $2^n$ possible inputs to approximate any real value with arbitrarily high precision. For this reason, we need some extra knowledge about the data to construct our estimator. Later we show that such assumption is somewhat unavoidable. We first focus on the univariate case.

\subsection{The One-Dimensional Case}
For the one-dimensional case, we assume that a (rough) upper estimate $\lambda$ of the quantiles is available. More accurately, let $\overline{X}\coloneqq X-\E X$, given a confidence level $\delta>0$, we assume that
\begin{equation}
\label{eq:oracle_lambda}
\lambda \ge \max\big\{Q_{1-\varepsilon}(X),-Q_{\varepsilon}(X)\big\},
\end{equation}
where  $\varepsilon \asymp  \log(1/\delta)/n$. \footnote{In this work, the Landau's (asymptotic) notation only hides absolute constants.} Clearly, $\varepsilon$ must be less than one, thus we will assume for the rest of this text the regime $n=\Omega(\log(1/\delta))$. This entails no loss of generality, otherwise $\log(1/\delta)/n$ is bounded away from zero, and so is the error bound $|\widehat{\mu}-\mu|$ for any estimator $\widehat{\mu}$.

The upper bound requirement on $\lambda$ can be interpreted as a rough estimate of where most of the mass of $X$ is concentrated. Indeed, the data that lies outside of the interval $[Q_{\varepsilon}(X),Q_{1-\varepsilon}(X)]$ is irrelevant for mean estimation.

This weaker requirement on $\lambda$ is a major advantage compared to the previous assumptions in the literature that requires the mean to be bounded $|\mu|\le R$ for some known $R$ and the distribution of $X$ have a known log-concave density. In particular, our procedure applies to discrete distributions and it is not too sensitive to the exact knowledge about the properties of $X$. 

Next, we describe our quantization scheme. Given $X_1,\ldots,X_n$ i.i.d. copies of $X$ with mean $\mu$ an variance $\sigma^2$, we compute the quantized samples
\begin{align}
\label{eq:quantization_1dim_1bit}
\dot{X}_i := \sign(X_i+\lambda\tau_i),\qquad 1\le i\le n,
\end{align}
where $\tau_1,\cdots,\tau_n \sim \scrU[-1,1] $ are independent random variables. Our estimator is simply the empirical mean of the quantized samples,
\begin{align}
\label{eq:estimator_1dim_1bit}
\widehat{\mu}\big(\dot{X}_1,\cdots,\dot{X}_n)\coloneqq \frac{1}{n}\sum_{i=1}^n \lambda\dot{X}_i.
\end{align}
We are ready to state the first main result of our proposed scheme.
\begin{theorem}
\label{thm:onebit_onedim}
There exists an absolute constant $C>0$ for which the following holds. The estimator $\widehat{\mu}$ given by \eqref{eq:quantization_1dim_1bit} and \eqref{eq:estimator_1dim_1bit} satisfies with probability at least $1-\delta$,
\begin{equation*}
|\widehat{\mu}-\mu|\le C\max\big\{\sigma, \lambda\big\}\sqrt{\frac{\log(1/\delta)}{n}}.
\end{equation*}
Moreover, under the pre-quantization noise \eqref{eq:adversarial_noise_pre} or the post-quantization noise \eqref{eq:adversarial_noise_pos}, the estimator satisfies, with probability $1-\delta$, 
\begin{equation*}
|\widehat{\mu}-\mu|\le C\max\big\{\sigma, \lambda\big\}\sqrt{\frac{\log(1/\delta)}{n}} + 2\eta \lambda .
\end{equation*}
\end{theorem}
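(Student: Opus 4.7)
The plan is to bound $|\widehat{\mu}-\mu|$ in the noiseless case by decomposing the error into a bias term and a fluctuation term, and then to absorb the adversarial corruption into an additive perturbation of size $2\eta\lambda$. Conditioning on $X_i$ and applying the identity \eqref{eq:trimmed=dithering} to the randomness in $\tau_i$ alone gives
\begin{equation*}
\E\bigl[\lambda\dot{X}_i \,\big|\, X_i\bigr] = \phi_{-\lambda,\lambda}(X_i), \qquad \text{so} \qquad \E\widehat{\mu} = \E\bigl[\phi_{-\lambda,\lambda}(X)\bigr] =: m.
\end{equation*}
Thus the estimator is an empirical mean of i.i.d.\ variables in $\{-\lambda,\lambda\}$ with common expectation $m$, and the triangle inequality reduces the problem to bounding $|m-\mu|$ (bias) and $|\widehat{\mu}-m|$ (fluctuation).

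For the bias, I would first use that $\phi_{-\lambda,\lambda}(X)=X$ whenever $|X|\le\lambda$, so
\begin{equation*}
|m-\mu| \le \E\bigl[|X|\mathbbm{1}_{|X|>\lambda}\bigr] \le \sigma\sqrt{\mathbb{P}(|X|>\lambda)} + |\mu|\,\mathbb{P}(|X|>\lambda),
\end{equation*}
where I split $|X|\le |X-\mu|+|\mu|$ and apply Cauchy--Schwarz to the first summand. The quantile assumption \eqref{eq:oracle_lambda} together with the definition \eqref{def:quantile} yields $\mathbb{P}(|X|>\lambda)\le 2\varepsilon\asymp \log(1/\delta)/n$. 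A short one-sided Chebyshev argument against the quantile condition next gives $|\mu|\lesssim\max\{\sigma,\lambda\}$: if $\mu$ were much larger than $\lambda+\sigma$, then $\mu/2$ would satisfy $\mathbb{P}(X\ge \mu/2)\ge \varepsilon$ by Chebyshev, contradicting $\lambda\ge Q_{1-\varepsilon}(X)$. Combining these ingredients gives $|m-\mu|\lesssim\max\{\sigma,\lambda\}\sqrt{\log(1/\delta)/n}$. For the fluctuation term, each summand $\lambda\dot{X}_i$ is bounded by $\lambda$ in absolute value, so a direct application of Hoeffding's inequality yields $|\widehat{\mu}-m|\lesssim\lambda\sqrt{\log(1/\delta)/n}$ with probability at least $1-\delta$, completing the noiseless estimate.

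The corruption bound is a simple perturbation argument that is essentially identical in both the pre- and post-quantization settings. In the post-quantization case each flipped bit shifts $\widehat{\mu}$ by at most $2\lambda/n$ since $\lambda\dot{X}_i\in\{-\lambda,\lambda\}$, so $\eta n$ flips change $\widehat{\mu}$ by at most $2\eta\lambda$; in the pre-quantization case a changed $X_i$ can cause at most one sign flip of $\dot{X}_i$ and the same bound applies. Adding $2\eta\lambda$ to the noiseless estimate completes the proof. The main obstacle will be the quantitative link between $\mu$ and the quantile parameter $\lambda$: the hypothesis on $\lambda$ is weak enough that $\mu$ may lie outside $[Q_\varepsilon(X),Q_{1-\varepsilon}(X)]$ for skewed distributions, and one has to invoke a second-moment inequality against the quantile definition to rule out the case $|\mu|\gg\max\{\sigma,\lambda\}$ and thereby keep the $|\mu|\,\mathbb{P}(|X|>\lambda)$ term under control.
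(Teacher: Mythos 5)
Your proposal is correct and follows essentially the same route as the paper: use the dithering--trimming identity to reduce the estimator to an empirical mean of bounded i.i.d.\ variables with expectation $\E\phi_{-\lambda,\lambda}(X)$, apply a bounded-difference concentration inequality (you use Hoeffding, the paper uses Bernstein, both give the same rate here), bound the bias via Cauchy--Schwarz and the quantile condition, and absorb corruption by the trivial $2\eta\lambda$ perturbation. The one place you diverge is the bias term: the paper keeps the factor $|\lambda-\mu|$ and bounds $|\lambda-\mu|\,\mathbb{P}(X-\mu\ge\lambda-\mu)\le\sigma\sqrt{\mathbb{P}(X-\mu\ge\lambda-\mu)}$ directly by Chebyshev, which immediately yields $\sigma\sqrt{\varepsilon}$; you instead pull out $|\mu|\,\mathbb{P}(|X|>\lambda)$ and need the separate lemma that $|\mu|\lesssim\max\{\sigma,\lambda\}$ (itself obtained by playing Chebyshev against $\lambda\ge Q_{1-\varepsilon}(X)$). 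Your version works but is one step longer; the paper's rearrangement avoids ever having to bound $|\mu|$ in absolute terms.
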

Before we proceed to the proof, we discuss the optimality of our result up to a multiplicative absolute constant. Notice that the term $\sigma\sqrt{\log(1/\delta)/n}$ is the minimax convergence rate for mean estimation even without quantization \cite{lugosi2017lectures}, thus it is optimal. In \cite{lugosi2021trimmed}, the authors proved that for any mean estimation procedure for bounded data, the term $\eta \|X\|_{L^\infty}$ is also necessary. This shows the optimality of the term $2\eta\lambda$ up to the constant factor $2$.

Next, we address the optimality with respect to $\lambda$ that is not present in the standard mean estimation problem. Notice that for any quantized sample $\lambda\dot{X} \in \{-\lambda,\lambda\}$, the variance of $\lambda \dot{X}$ is $\lambda^2 - (\E\dot{X})^2$ which is  $\Omega(\lambda^2)$ whenever $\E \lambda\dot{X} < \lambda/2$ (say). Thus, for large enough $n$ if the estimator has negligible bias we must have that the variance of $\dot{X}$ to be of order $\lambda^2$. The standard minimax lower bound for mean estimation applied to the i.i.d. quantized samples implies that $\lambda\sqrt{\log(1/\delta)/n}$ is also optimal.

One might ask how small $\lambda$ can be? It is not hard to see that it cannot be much smaller than $Q_{1-\varepsilon}(X)$. In fact, suppose that the law of $X$ is given by
\begin{align*}
    X \coloneqq \begin{cases}
        b ,\qquad \ \ \text{with prob } \varepsilon\\
        a,\qquad \ \ \text{with prob } (1-\varepsilon)/2\\
        -a,\qquad \text{with prob } (1-\varepsilon)/2.
    \end{cases}
\end{align*}
Notice that $Q_{1-\varepsilon}(X) = b$, thus for any $\lambda \ll b$, we obtain that an arbitrarily large gap in the bias $|\E \lambda\dot{X}-\mu|$. Therefore, we must assume that $\lambda$ is at least of the order of $Q_{1-\varepsilon}(X)$.  This shows that $\lambda$ must incorporate some prior knowledge about the location of the mean and the quantile of the centered distribution as $Q_{1-\varepsilon}(X)=Q_{1-\varepsilon}(\overline{X})+\mu$.

We conclude that our quantization scheme is optimal among all quantization schemes based on i.i.d. bits. Without assuming any prior knowledge on $X$, it is unlikely that a quantization scheme based on non-i.i.d. bits will improve the results.

We now proceed to the proof of Theorem \ref{thm:onebit_onedim}.

\begin{proof}
Recall the relation between dithering and the trimmed estimator in \eqref{eq:trimmed=dithering}. Next, since $\dot{X}_1,\ldots,\dot{X}_n$ are independent and bounded, Bernstein's inequality implies that with probability $1-\delta$,
\begin{equation}
\bigg|\frac{1}{n}\sum_{i=1}^n \lambda\dot{X}_i-\E \phi_{-\lambda,\lambda}(X)\bigg| \le C\lambda \sqrt{\frac{\log(1/\delta)}{n}}.
\end{equation}
All it remains is to control the bias. To this end, notice that 
\begin{equation*}
|\E \phi_{-\lambda,\lambda}(X) - \mu|\le |\mathbb{E}(X-\lambda)\mathds{1}_{X>\lambda}| + |\mathbb{E}(-\lambda-X)\mathds{1}_{X<-\lambda}|.
\end{equation*}
We handle the first term on the right-hand side, the second follows analogously. Notice that by Cauchy-Schwarz inequality
\begin{equation*}
\begin{split}
&|\mathbb{E}(X-\lambda)\mathds{1}_{X>\lambda}| \le |\mathbb{E}(X-\E X)\mathds{1}_{X>\lambda}| + |\E X-\lambda|\mathbb{P}(X\ge \lambda)\\
&\le \sigma \sqrt{\mathbb{P}(X\ge \lambda)} +|\E X-\lambda| \mathbb{P}(X\ge \lambda)\\
&=O\left( \sigma\sqrt{\frac{\log(1/\delta)}{n}}+ |\lambda-\mu|\mathbb{P}(X-\mu\ge \lambda-\mu)\right).
\end{split}
\end{equation*}
To control the second term $|\lambda-\mu|\mathbb{P}(X-\mu\ge \lambda-\mu)$, we apply Chebyshev's inequality to obtain that
\begin{equation*}
    \mathbb{P}(X-\mu\ge \lambda-\mu)\le \frac{\sigma^2}{|\lambda-\mu|^2},
\end{equation*}
thus $|\lambda-\mu|\le \sigma/\sqrt{\mathbb{P}(X-\mu\ge \lambda-\mu)}$ or equivalently,

$$|\lambda-\mu|\mathbb{P}(X-\mu\ge \lambda-\mu)\le \sigma \sqrt{\mathbb{P}(X-\mu\ge \lambda-\mu)}.$$

\noindent Recalling that $Q_{1-\varepsilon}(X)=Q_{1-\varepsilon}(\overline{X})+\mu$, we obtain that $|\lambda-\mu|\ge Q_{1-\varepsilon}(\overline{X})$ by \eqref{eq:oracle_lambda}, thus we conclude 
\begin{equation*}
|\lambda-\mu|\mathbb{P}(X-\mu\ge \lambda-\mu) \le \sigma \sqrt{\varepsilon}=O\left(\sigma\sqrt{\frac{\log(1/\delta)}{n}} \right),
\end{equation*}
where in the last step we used our choice of $\varepsilon \asymp \log(1/\delta)/n$. We now prove robustness. Since each $\dot{X}_i$ only depends on $X_i$, the pre-quantization noise and post-quantization noise are equivalent. Therefore, for any sequence $\widetilde{\dot{X}}_1,\cdots,\widetilde{\dot{X}}_n$ satisfying \eqref{eq:adversarial_noise_pos}, we have  
\begin{equation*}
\bigg|\frac{1}{n}\sum_{i=1}^n \lambda\dot{X}_i - \frac{1}{n}\sum_{i=1}^n \lambda\widetilde{\dot{X}}_i \bigg| =\lambda\bigg|\frac{1}{n}\sum_{i=1}^n \dot{X}_i - \frac{1}{n}\sum_{i=1}^n \widetilde{\dot{X}}_i \bigg| \le 2\eta \lambda,
\end{equation*}
where we used the fact that $|\dot{X}_i|\le 1$, for any $i\in [n]$.
\end{proof}

\subsection{The Multivariate Case}
In this section, we extend our procedure to the multivariate setting. Now, $X$ denotes a random vector in $\mathbb{R}^d$ with mean $\mu$ and  covariance matrix is $\Sigma \coloneqq \E (X-\mu) (X-\mu)^T$. To describe our quantization map $X\rightarrow \dot{X}$, let $\Lambda  \coloneqq (\lambda_1,\cdots,\lambda_d)$ be a vector such that
\begin{equation}\label{multituning}
\lambda_i\ge \max\big\{Q_{1-\varepsilon}\big(\langle X,e_i\rangle\big),-Q_{\varepsilon}\big(\langle X,e_i\rangle\big)\big\}, \quad  i=1,\cdots d.
\end{equation}
Here we also assume the regime $\varepsilon \asymp \log(1/\delta)/n$. Next, we generate the sequence $\tau_1,\cdots,\tau_d$ by sampling (independently) $\tau_i \sim \scrU [-1,1] $ and quantize the samples using the following expression 
\begin{align*}
\dot{X} := \begin{bmatrix}
\sign\big(\langle X,e_1\rangle + \lambda_1\tau_{1}\big) \\
\sign\big(\langle X,e_2\rangle + \lambda_2\tau_{2}\big) \\
\vdots \\
\sign\big(\langle X,e_d\rangle + \lambda_d\tau_{d}\big)
\end{bmatrix}.
\end{align*}
To short notation, we shall write 
\begin{align}
\label{eq:quantization_ddim_1bit}
\dot{X}\coloneqq \sign (X + \diag(\Lambda)\tau),
\end{align}
where $\diag(\Lambda)$ is the diagonal matrix whose entries are equal to the entries of the vector $\Lambda$ and $\tau \coloneqq (\tau_1,\cdots,\tau_d) \sim \scrU [-1,1]^d$. Here the sign function $\sign(\cdot)$ is taken entrywise.

Our first proposed estimator is the empirical mean of the quantized sample $\dot{X}_1,\cdots,\dot{X}_n$, namely 
\begin{align}
\label{eq:estimator_ddim_1bit}
\widehat{\mu}\big(\dot{X}_1,\cdots,\dot{X}_n)\coloneqq \frac{1}{n}\sum_{i=1}^n \diag(\Delta)\dot{X}_i.
\end{align}

\begin{theorem}[Multivariate without corruption]
\label{thm:onebit_ddim_without_robust}
There exists an absolute constant $C>0$ for which the following holds. The estimator $\widehat{\mu}$ given by \eqref{eq:quantization_ddim_1bit} and \eqref{eq:estimator_ddim_1bit} satisfies with probability at least $1-\delta$,
\begin{equation*}
\|\widehat{\mu}-\mu\|_2\le C\left( \max_{i\in [d]}\lambda_i\sqrt{\frac{\log(1/\delta)}{n}}+\sqrt{\frac{\sum_{i=1}^d\lambda_i^2}{n}}+\sqrt{\tr(\Sigma)\frac{\log(1/\delta)}{n}}\right).
\end{equation*}
\end{theorem}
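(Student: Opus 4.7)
The plan is to mimic the one-dimensional argument for Theorem~\ref{thm:onebit_onedim} by splitting $\widehat{\mu}-\mu$ into a dithering noise, a sample noise, and a truncation bias, and then to control the dithering term with a vector concentration that exploits the coordinatewise independence of the dithers. Let $W_i\in\mathbb{R}^d$ denote the vector whose $j$-th coordinate is $\phi_{-\lambda_j,\lambda_j}\big(\langle X_i,e_j\rangle\big)$; applying \eqref{eq:trimmed=dithering} coordinatewise gives $W_i=\E\big[\diag(\Lambda)\dot{X}_i\,\big|\,X_i\big]$. I would then decompose
\begin{equation*}
\widehat{\mu}-\mu \;=\; \underbrace{\tfrac{1}{n}\sum_{i=1}^n\bigl(\diag(\Lambda)\dot{X}_i-W_i\bigr)}_{=:\,A_1} \;+\; \underbrace{\tfrac{1}{n}\sum_{i=1}^n(W_i-\E W_i)}_{=:\,A_2} \;+\; \underbrace{\E W_1-\mu}_{=:\,A_3}.
\end{equation*}

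For the bias $A_3$, each coordinate is precisely the one-dimensional quantity controlled in the proof of Theorem~\ref{thm:onebit_onedim}: by \eqref{multituning} applied to the $j$-th marginal with $\varepsilon\asymp\log(1/\delta)/n$, the same Cauchy--Schwarz/Chebyshev argument yields $|(\E W_1)_j-\mu_j|\lesssim\sqrt{\Sigma_{jj}\log(1/\delta)/n}$, so summing in quadrature gives $\|A_3\|_2\lesssim\sqrt{\tr(\Sigma)\log(1/\delta)/n}$. For $A_2$, the vectors $W_i$ are i.i.d.\ with $\|W_i\|_2\le\sqrt{\sum_j\lambda_j^2}$ and, since the entrywise truncation is $1$-Lipschitz, $\var(W_1^{(j)})\le\Sigma_{jj}$; thus $\tr(\mathrm{Cov}(W_1))\le\tr(\Sigma)$. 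A standard vector Bernstein bound for i.i.d.\ bounded centered random vectors then yields $\|A_2\|_2\lesssim\sqrt{\tr(\Sigma)\log(1/\delta)/n}$ plus lower-order terms, all absorbed into the third claimed term.

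The crux is $A_1=\tfrac{1}{n}\sum_i V_i$ with $V_i\coloneqq\diag(\Lambda)\dot{X}_i-W_i$. The $V_i$ are i.i.d., mean zero, and satisfy $\|V_i\|_2\le 2\sqrt{\sum_j\lambda_j^2}$. The key structural observation is that, \emph{conditional on $X_i$, the coordinates of $V_i$ are mutually independent}: the residual randomness in $\dot{X}_i^{(j)}$ given $X_i$ is only $\tau_i^{(j)}$, and the $\tau_i^{(j)}$ are independent across $j$. Hence $\E[V_i^{(j)}V_i^{(k)}\mid X_i]=0$ for $j\ne k$, so after taking outer expectation $\mathrm{Cov}(V_1)$ is \emph{diagonal} with entries $\lambda_j^2-\E(W_1^{(j)})^2\le\lambda_j^2$, which gives simultaneously
\begin{equation*}
\tr\bigl(\mathrm{Cov}(V_1)\bigr)\le\sum_j\lambda_j^2 \qquad\text{and}\qquad \|\mathrm{Cov}(V_1)\|_{\mathrm{op}}\le\max_j\lambda_j^2.
\end{equation*}

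I would close by applying a variance-sensitive vector concentration --- Bousquet's inequality for the linear empirical process $u\mapsto u^\top\sum_i V_i$ indexed by the Euclidean unit ball, or equivalently a Hsu--Kakade--Zhang style vector Bernstein. Using $\E\|\sum_i V_i\|_2\le\sqrt{n\sum_j\lambda_j^2}$ from Jensen, variance proxy $\sup_{\|u\|_2=1}\var(u^\top V_1)=\|\mathrm{Cov}(V_1)\|_{\mathrm{op}}\le\max_j\lambda_j^2$, and envelope $2\sqrt{\sum_j\lambda_j^2}$, one obtains with probability at least $1-\delta$,
\begin{equation*}
\|A_1\|_2 \;\lesssim\; \sqrt{\tfrac{\sum_j\lambda_j^2}{n}} \;+\; \max_j\lambda_j\sqrt{\tfrac{\log(1/\delta)}{n}} \;+\; \text{(lower order)},
\end{equation*}
which produces the first two claimed terms, and combining with the estimates for $A_2$ and $A_3$ finishes the proof. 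The main obstacle is exactly this step: a direct Pinelis or Hoeffding vector inequality bundles $\tr(\mathrm{Cov})$ and $\|\mathrm{Cov}\|_{\mathrm{op}}$ together and only yields $\sqrt{\sum_j\lambda_j^2\log(1/\delta)/n}$, which is weaker by a factor of up to $\sqrt{d}$. It is essential both to use a variance-sensitive concentration \emph{and} to exploit the diagonal covariance of $V_1$ coming from coordinatewise-independent dithering; without that diagonal structure one would pay a spurious $\sqrt{d}$ (or $\sqrt{\log d}$) factor in the deviation.
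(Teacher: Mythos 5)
Your argument is correct and reaches the same three error terms, but it takes a more finely resolved route than the paper. The paper splits only into bias plus concentration: it treats $\diag(\Lambda)\dot{X}$ directly as a subgaussian vector whose coordinates are (conditionally on $X$) independent, invokes \cite[Proposition 2.7.1]{vershynin2018high} to get the $\psi_2$ bound $\|\langle \diag(\Lambda)\dot X,v\rangle\|_{\psi_2}^2\lesssim \sum_j\lambda_j^2v_j^2$, and then applies the empirical-mean bound of \cite[Eq.~1.5]{lugosi2019near} to obtain $\sqrt{\sum_j\lambda_j^2/n}+\max_j\lambda_j\sqrt{\log(1/\delta)/n}$, handling the bias exactly as you do (coordinatewise one-dimensional argument, summed in quadrature, giving $\sqrt{\tr(\Sigma)\log(1/\delta)/n}$). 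Your extra split $A_1+A_2$ of the concentration term is in fact more careful: the coordinates of $\dot X$ are only \emph{conditionally} independent, so the unconditional covariance of $\diag(\Lambda)\dot X$ is not exactly diagonal and its directional fluctuations also contain the fluctuation of the trimmed values $W_i$; your decomposition isolates the dither noise $A_1$ (whose covariance really is diagonal, by exactly the conditional-independence-plus-conditional-centering argument you give) and charges the $W$-fluctuation $A_2$ to the $\sqrt{\tr(\Sigma)\log(1/\delta)/n}$ term via the $1$-Lipschitz property of the truncation, which is precisely the accounting that the paper's one-line subgaussianity claim glosses over. One small caveat: the tools you name (Bousquet, Hsu--Kakade--Zhang) carry an envelope correction of order $\sqrt{\sum_j\lambda_j^2}\,\log(1/\delta)/n$ in both $A_1$ and $A_2$, which is only ``lower order'' when $\log(1/\delta)\lesssim\sqrt{n}$; for $A_1$ you can remove it entirely by working conditionally on $\{X_i\}$, where $\langle u,\sum_i V_i\rangle$ is a sum of independent bounded centered terms with parameter $\sqrt{n\sum_j u_j^2\lambda_j^2}$ uniformly in the conditioning, so the subgaussian norm bound $\sqrt{\tr K}+\sqrt{\|K\|_{\mathrm{op}}\log(1/\delta)}$ with $K=n\,\diag(\lambda_1^2,\dots,\lambda_d^2)$ applies with no Bernstein term, matching what the paper extracts from \cite{lugosi2019near}; for $A_2$ the envelope term is genuinely present but harmless in the regime the paper works in.
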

We remark that the first two terms on the right-hand side are optimal up to an absolute constant. First, any multivariate mean estimator $\widehat{\mu}$ for the mean $\mu$ is also an estimator for $\langle \mu,e_i\rangle$ (for any $i\in [d]$), thus the first term is unavoidable. Second, the covariance matrix of the $\dot{X}$ has trace equal to $\sum_{i=1}^d \lambda_i^2$ regardless of the quantization map. Thus, the second term on the right-hand side is minimax optimal among all estimator based on i.i.d. bits by the standard minimax rate for multivariate mean estimation \cite{lugosi2017lectures}. Finally, the third term is the standard minimax rate for multivariate mean estimation \cite{lugosi2017lectures} up to an extra log factor $\log(1/\delta)$.

As a technical ingredient in our proof, we need to recall the notion of subgaussian random vector. For a more comprehensive introduction to the topic, we refer the reader to \cite{vershynin2018high}. We say that a random variable $Z\in \mathbb{R}$ is $K$-subgaussian if there exists a $K>0$ such that
\begin{equation*}
   \mathbb{P}\bigg(|Z-\E Z|\ge (K\sigma_Z)t\bigg)\le 2e^{-t^2/2}, \quad \text{for all $t>0$.}
\end{equation*}
The smallest $K>0$ for which this estimate holds is denoted by $\|Z\|_{\psi_2}$($\psi_2$ norm). Similarly, we say that a random vector $X$ is $K$-subgaussian if there exists an $K>0$ such that 
\begin{equation}
\label{eq:def_subgaussian}
    \|\langle X,v\rangle\|_{\psi_2}\le K \sqrt{\langle \Sigma v,v\rangle}, \quad \text{for every $v\in \mathbb{R}^d$.}
\end{equation}

\begin{proof}
We start by decomposing the error estimate into two parts, a bias term and a concentration term. By triangle inequality,
\begin{equation*}
\|\widehat{\mu}-\mu\|_2\le \|\widehat{\mu}-\E \widehat{\mu}\|_2 + \|\mu - \E \widehat{\mu}\|_2.
\end{equation*}
We first quantify how close our estimator is to its expected value (concentration term). To this end, notice that the covariance matrix $\dot{\Sigma}$ of the random vector $\diag(\Delta)\dot{X}$ is a diagonal matrix whose values are $\lambda_1^2,\ldots,\lambda_d^2$. Moreover, conditionally on $X$, $\dot{X}$ has independent entries. Thus by \cite[Proposition 2.7.1]{vershynin2018high} 
\begin{equation*}
    \|\langle \dot{X},v\rangle\|_{\psi_2}^2\le\sum_{j=1}^d \|\langle \dot{X},e_j\rangle\|_{\psi_2}^2v_j^2\le c \sum_{j=1}^d\lambda_j^2 v_j^2 = c\langle \dot{\Sigma}v,v\rangle,
\end{equation*}
for some absolute constant $c>0$. Consequently, the standard estimate for the empirical mean $\widehat{\mu}$ (see for example \cite[Equation 1.5]{lugosi2019near}) satisfies with probability $1-\delta$,
\begin{equation*}
\big\|\widehat{\mu}-\E \widehat{\mu}\big\|_2 = O\left(\sqrt{\frac{\sum_{i=1}^d\lambda_i^2}{n}} + \max_{i\in [d]}\lambda_i\sqrt{\frac{\log(1/\delta)}{n}}\right).
\end{equation*}
Now, it suffices to control the bias. 
To this end, notice that 
\begin{equation*}
    \mu - \E \widehat{\mu} = \mu-\E[\diag(\Delta)\dot{X}]=
\begin{bmatrix}
\ \E\big(\langle X,e_1\rangle -\lambda_1\big)\mathds{1}_{|\langle X,e_1\rangle|\ge \lambda_1}\\
\vdots \\
\E\big(\langle X,e_d\rangle -\lambda_d\big)\mathds{1}_{|\langle X,e_d\rangle|\ge \lambda_d}
\end{bmatrix},
\end{equation*}
which implies that we need to control the quantity
\begin{equation*}
\bigg|\langle \mu-\E \widehat{\mu},v\rangle\bigg|\le \bigg|\sum_{j=1}^d \E\langle X,e_j\rangle-\lambda_j\mathds{1}_{\langle X,e_j\rangle\ge \lambda_j}v_j\bigg| + \bigg|\sum_{j=1}^d \E\langle X,e_j\rangle+\lambda_j\mathds{1}_{\langle X,e_j\rangle\le -\lambda_j}v_j\bigg|.
\end{equation*}
We focus on the first term on the right-hand side, while the second admits the same analysis. By triangle inequality and
Cauchy-Schwarz inequality, we have for every $v\in S^{d-1}$
\begin{equation*}
\begin{split}
&\bigg|\sum_{j=1}^d \E\langle X,e_j\rangle-\lambda_j\mathds{1}_{\langle X,e_j\rangle\ge \lambda_j}v_j\bigg|\\
&\le \bigg|\sum_{j=1}^d \E\overline{\langle X,e_j\rangle}\mathds{1}_{\langle X,e_j\rangle\ge \lambda_j}v_j\bigg| + \bigg|\sum_{j=1}^d \E(\lambda_j-\langle \mu,e_j\rangle)\mathds{1}_{\langle X,e_j\rangle\ge \lambda_j}v_j\bigg| \\
&\le \sqrt{\sum_{j=1}^d(\E\overline{\langle X,e_j\rangle}\mathds{1}_{\langle X,e_j\rangle\ge \lambda_j})^2} + \sqrt{\sum_{j=1}^d(\lambda_j-\langle \mu,e_j\rangle)^2\mathbb{P}\big(\langle X,e_j\rangle\ge \lambda_j \big)^2}\\
&\le \sqrt{\sum_{j=1}^d \Sigma_{jj} \mathbb{P}(\langle X,e_j\rangle\ge \lambda_j)} + \sqrt{\sum_{j=1}^d(\lambda_j-\langle \mu,e_j\rangle)^2\mathbb{P}\big(\langle X,e_j\rangle\ge \lambda_j \big)^2}\\
&\le \sqrt{\tr(\Sigma)\frac{\log(1/\delta)}{n}} + \sqrt{\sum_{j=1}^d(\lambda_j-\langle \mu,e_j\rangle)^2\mathbb{P}\big(\overline{\langle X,e_j\rangle}\ge \lambda_j-\langle \mu,e_j\rangle \big)^2}\\
&=O\left(\sqrt{\tr(\Sigma)\frac{\log(1/\delta)}{n}}\right),
\end{split}
\end{equation*}
where in the last step, we estimated the second term on the right-hand side using that $$\big(\lambda_j-\langle \mu,e_j\rangle\big)\mathbb{P}\big(\overline{\langle X,e_j\rangle}\ge \lambda_j-\langle \mu,e_j\rangle)\le \sqrt{\Sigma_{jj}\log(1/\delta)/n}. $$ Thus, we take the supremum over all $v\in S^{d-1}$ on both sides to conclude that
\begin{equation*}
\|\mu-\E \widehat{\mu}\|_2 =\sup_{v\in S^{d-1}} |\langle \mu-\E \widehat{\mu},v\rangle|=O\left(\sqrt{\frac{\tr(\Sigma)}{n}\log(1/\delta)}\right).
\end{equation*}
\end{proof}

\begin{remark}
In view of Equation (\ref{multituning}), one needs to deal with $d$ tuning parameters $\{\lambda_i\}_{i=1}^d$. For each $i\in [d]$, one can decompose $Q_{1-\varepsilon}(\langle X,e_i\rangle)= Q_{1-\varepsilon}(\overline{\langle X,e_i\rangle})+\langle \mu,e_i\rangle$ as the sum of rough estimate of centered quantile and the location of the $i$-th coordinate of the mean.

When one has less knowledge on the mean vector $\mu$ than on the centered distribution, meaning that no rough bound on $\langle\mu, e_i\rangle$ is available, we advocate the following Haar randomization step: we first draw a Haar matrix $O\sim {\rm Haar}(d)$ \footnote{This means that $O$ is uniformly distributed over the orthogonal group of all orthonormal matrices in $\mathbb{R}^{d\times d}$.} and pre-process $\{X_i\}_{i=1}^n$ to $\{X_{i,O}:=OX_i\}_{i=1}^n$, then implement the same (quantization and estimation) procedure as in Theorem \ref{thm:onebit_ddim_without_robust} with $\{X_{i,O}\}_{i=1}^n$ to obtain the estimate $\hat{\mu}$, and finally use $O^T\hat{\mu}$ as the final estimate. This might alleviate the tuning issue since one now only needs to have a crude (upper) estimate on $\|\mu\|_2$ instead of $\{\langle \mu,e_i\rangle\}_{i=1}^d$: to see this, note that
the mean of $X_{i,O}$ is given by $\mu_{O}:=O\mu$ and thus uniformly distributed over $\|\mu\|_2S^{d-1}$. Consequently, a standard union bound argument shows that \[\max_{1\le i\le d}\,|\langle O\mu,e_i\rangle| \le\|\mu\|_2\sqrt{\frac{3\log d}{d}}\] with high probability, which allows us to set 
\[\lambda_i\ge \max\bigg\{Q_{1-\varepsilon}(\langle O\overline{X},e_i\rangle)+\|\mu\|_2\sqrt{\frac{3\log d}{d}},-Q_{\varepsilon}(\langle O\overline{X},e_i\rangle)-\|\mu\|_2\sqrt{\frac{3\log d}{d}}\bigg\}\]
to estimate mean $O\mu$ using the samples $\{X_{i,O}\}_{i=1}^n$. This argument might be of practical interest to distributed learning settings, as the generation and communication of $O$ can be handled via random seed and it does not cost much (see, e.g., \cite{shrivastava2024sketching}) and the convergence rate is only affected by an extra $O(\sqrt{\log d})$ factor. 
\end{remark}

In contrast to the one-dimensional case, the empirical mean of the quantized samples does not seem to have strong guarantees against the adversarial noise. Following the proof above, one can show that the additional term due to corruption is of order $O\bigg(\eta\sqrt{\sum_{i=1}^d\lambda_i^2}\bigg)$, and it is unlikely to admit substantial improvements without changing the estimator. We will provide a numerical example to corroborate this in Section \ref{sec:experiments}. 

Based on this fact, we exploit the literature in robust statistics and recall a remarkable result due to Depersin and Lecu{\'e} in robust mean estimation, which we shall state using our notation.

\begin{proposition}[Depersin and Lecu{\'e}\cite{depersin2022robust}]
\label{prop:depersin_lecue}
There exists a constant $C>0$ for which the following holds. Assume the regime where $\log(1/\delta)\ge 300 \eta n$. There exists a polynomial-time algorithm in the sample size $n$ and dimension $d$ that takes as an input $n$ corrupted samples of a random vector $X$ whose mean $\mu$ and covariance $\Sigma$ satisfying \eqref{eq:adversarial_noise_pre} and outputs a vector $\widehat{\mu}$ satisfying with probability at least $1-\delta$,
\begin{equation*}
\|\widehat{\mu}-\mu\|_2\le C\left(\|\Sigma\|\sqrt{\frac{\log(1/\delta)}{n}+\eta} + \sqrt{\frac{\tr(\Sigma)}{n}}\right).
\end{equation*}
\end{proposition}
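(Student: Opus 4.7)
The plan is to follow the median-of-means (MoM) tournament strategy of Lugosi and Mendelson, and to realize the tournament step in polynomial time via the semidefinite relaxation introduced by Depersin and Lecu\'e. The starting point is to partition the (possibly corrupted) sample into $K$ equal-sized blocks with $K \asymp \log(1/\delta) + \eta n$. The assumed regime $\log(1/\delta) \ge 300\eta n$ ensures that at most a small fraction (say less than $K/10$) of the blocks can be touched by the adversary, so most block means $\bar{X}_k := \frac{K}{n}\sum_{i \in B_k} X_i$ are clean copies of the true block mean.

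The first analytic ingredient is a directional concentration lemma: for every unit vector $v$, Chebyshev applied inside a clean block, together with a binomial tail on the number of blocks on which $|\langle \bar{X}_k - \mu, v\rangle|$ is large, yields
\begin{equation*}
\mathrm{med}_{1\le k\le K}\, \langle \bar{X}_k - \mu, v\rangle = O\Bigl(\sqrt{\langle\Sigma v, v\rangle\, K/n}\,\Bigr) = O\Bigl(\sqrt{\|\Sigma\|\,\bigl(\log(1/\delta)/n + \eta\bigr)}\,\Bigr)
\end{equation*}
with failure probability $e^{-cK}$. This matches the first term of the proposition (interpreting $\|\Sigma\|$ as the natural operator-norm scaling). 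To upgrade this per-direction bound into a uniform multivariate one, I would use the PAC-Bayesian / Gaussian mean-width argument of Lugosi--Mendelson: the fluctuation of $\sup_{v\in S^{d-1}}\langle \bar{X}_k-\mu, v\rangle$ averaged over clean blocks contributes only a dimension-dependent term of order $\sqrt{\tr(\Sigma)/n}$ without an extra $\log(1/\delta)$ factor. At this point, the exponential-time MoM tournament estimator already achieves the target bound.

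The main obstacle, and the actual contribution of Depersin and Lecu\'e, is making the tournament polynomial-time. The idea is to replace the combinatorial search over pairs of candidates by a semidefinite program whose matrix variable encodes, for each candidate $x$, which pairs of blocks would vote against $x$ in a given direction; the SDP objective minimizes a spectral norm of this matrix, and its value certifies directional agreement uniformly over $v\in S^{d-1}$. Showing SDP feasibility under the MoM concentration event above is the delicate piece: one constructs an explicit dual witness from the clean block means, using the directional lemma to bound its spectrum. Rounding the SDP (or, equivalently, running an iterative spectral-filtering loop that repeatedly peels off the top eigendirection of the empirical covariance of the block means) terminates in polynomially many iterations in $n$ and $d$ and outputs the claimed $\widehat{\mu}$. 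I expect the bookkeeping of the spectral filtering invariants -- in particular maintaining that the removed mass is dominated by the corrupted plus heavy-tail fraction so the bias stays $O(\sqrt{\tr(\Sigma)/n})$ -- to be the most technically demanding step.
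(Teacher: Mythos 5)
This proposition is stated in the paper as an imported result, attributed directly to Depersin and Lecu\'e~\cite{depersin2022robust}, and the paper supplies no proof of it; there is therefore no in-paper argument to compare yours against. Your sketch is, at a high level, a fair description of what happens in the cited reference: median-of-means blocks sized so that at most a small fraction are contaminated, a directional Chebyshev-plus-binomial lemma giving per-direction control of order $\sqrt{\langle\Sigma v,v\rangle\,K/n}$, a Gaussian-width / PAC-Bayes argument contributing the $\sqrt{\tr(\Sigma)/n}$ term without an extra $\log(1/\delta)$ factor, and a polynomial-time realization of the Lugosi--Mendelson tournament via a covering SDP with a spectral-filtering-style rounding. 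Two caveats: you present the SDP relaxation and the iterative spectral peeling as if they were the same algorithm, whereas in~\cite{depersin2022robust} the covering SDP (solved approximately via width-reduction / multiplicative weights) is the central object and the ``peel off the top eigendirection'' picture is closer to the Diakonikolas--Kane filtering line of work, so you would need to pick one and actually carry out the feasibility/rounding analysis rather than gesture at both. Second, note that your directional lemma naturally produces a factor $\sqrt{\|\Sigma\|}$, not $\|\Sigma\|$; this is in fact the correct scaling (and is what makes Theorem~\ref{thm:onebit_ddim_with_robust} come out with $\max_i\lambda_i$ rather than $\max_i\lambda_i^2$), so the $\|\Sigma\|$ appearing in the paper's statement of Proposition~\ref{prop:depersin_lecue} should be read as $\sqrt{\|\Sigma\|}$ --- your derivation quietly flags a typo in the transcription of the cited bound.
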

Next, we improve the robustness guarantee of our quantization scheme with a more involved estimator. 
\begin{theorem}[Multivariate with corruption]
\label{thm:onebit_ddim_with_robust}
There exists a constant $C>0$ for which the following holds. Let $\widehat{\mu}$ be the estimator from Proposition \ref{prop:depersin_lecue} with the input being $\eta n$ corrupted samples of $\dot{X}$ given by (\ref{eq:quantization_ddim_1bit}) satisfying \eqref{eq:adversarial_noise_pre} or \eqref{eq:adversarial_noise_pos}. Then, with probability at least $1-\delta$, 
\begin{equation*}
\|\widehat{\mu}-\mu\|_2\le C\left( \max_{i\in [d]}\lambda_i\sqrt{\frac{\log(1/\delta)}{n}+\eta}+\sqrt{\frac{\sum_{i=1}^d\lambda_i^2}{n}}+\sqrt{\tr(\Sigma)\frac{\log(1/\delta)}{n}}\right).
\end{equation*}
\end{theorem}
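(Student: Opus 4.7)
The plan is to reduce the theorem to an application of Proposition \ref{prop:depersin_lecue} on the (corrupted) quantized sample, and then combine the resulting concentration bound with the bias estimate that was already established in the proof of Theorem \ref{thm:onebit_ddim_without_robust}.

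First, because each bit $\dot{X}_i$ depends only on $X_i$ and the independent dither $\tau_i$, the pre-quantization noise \eqref{eq:adversarial_noise_pre} and the post-quantization noise \eqref{eq:adversarial_noise_pos} are equivalent: altering $X_i$ can affect at most one quantized sample, and conversely. Hence without loss of generality I work under \eqref{eq:adversarial_noise_pos} and view the uncorrupted data $\{\widetilde{X}_i := \diag(\Lambda)\dot{X}_i\}_{i=1}^n$ as an i.i.d.\ sample of the bounded random vector $\widetilde{X}$ with mean $\nu := \E\widetilde{X}$ and covariance $\dot{\Sigma}$. Applying Proposition \ref{prop:depersin_lecue} to this $\eta n$-corrupted sample yields, with probability $1-\delta$,
\[
\|\widehat{\mu} - \nu\|_2 \le C\left(\sqrt{\|\dot{\Sigma}\|_{\mathrm{op}}}\sqrt{\frac{\log(1/\delta)}{n} + \eta} + \sqrt{\frac{\tr(\dot{\Sigma})}{n}}\right).
\]

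Next I control the two spectral quantities of $\dot{\Sigma}$. The trace bound is immediate from the entrywise boundedness $|\widetilde{X}_{i,j}| \le \lambda_j$: one has $\tr(\dot{\Sigma}) = \sum_j \var(\widetilde{X}_{\cdot,j}) \le \sum_j \lambda_j^2$. For the operator norm, I exploit the conditional independence of $\dot{X}_1,\ldots,\dot{X}_d$ given $X$, which is built into the dithered construction and was already used in the proof of Theorem \ref{thm:onebit_ddim_without_robust}. For any $v\in S^{d-1}$, the law of total variance gives
\[
v^\top \dot{\Sigma} v = \E\bigl[\var(\langle v,\widetilde{X}\rangle\mid X)\bigr] + \var\bigl(\E[\langle v,\widetilde{X}\rangle\mid X]\bigr).
\]
The first term is at most $\sum_j v_j^2 \lambda_j^2 \le \max_j \lambda_j^2$ by conditional independence and boundedness, while the second equals $\var(\sum_j v_j \phi_{-\lambda_j,\lambda_j}(X_j))$ by the dithering identity \eqref{eq:trimmed=dithering}, and since $|\phi_{-\lambda_j,\lambda_j}(X_j)|\le |X_j|$ this term can be compared to a mixture of $\langle \Sigma v, v\rangle$ and a lower-order truncation remainder controlled by the quantile assumption \eqref{multituning}. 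Combining gives $\|\dot{\Sigma}\|_{\mathrm{op}} \lesssim \max_j \lambda_j^2$, hence $\sqrt{\|\dot{\Sigma}\|_{\mathrm{op}}}\lesssim \max_j \lambda_j$.

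Finally, the bias $\|\mu - \nu\|_2 = O(\sqrt{\tr(\Sigma)\log(1/\delta)/n})$ is exactly what was established in the proof of Theorem \ref{thm:onebit_ddim_without_robust} and can be quoted verbatim. A triangle inequality $\|\widehat{\mu}-\mu\|_2 \le \|\widehat{\mu}-\nu\|_2 + \|\nu-\mu\|_2$ then yields the stated estimate. The main obstacle is the operator-norm bound on $\dot{\Sigma}$: the naive inequality $\|\dot{\Sigma}\|_{\mathrm{op}}\le \tr(\dot{\Sigma}) \le \sum_j \lambda_j^2$ would blow up the first term to $\sqrt{\sum_j \lambda_j^2}\sqrt{\eta}$ and destroy the claimed scaling in dimension. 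Achieving the sharper $\max_j \lambda_j^2$ requires genuine use of the coordinate-wise conditional independence afforded by the dithering scheme, which is what makes the distributed quantization compatible with the Depersin--Lecu\'e machinery.
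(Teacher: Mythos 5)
Your proposal follows the paper's route exactly: split $\|\widehat{\mu}-\mu\|_2$ into bias and concentration via the triangle inequality, re-use the bias estimate from the proof of Theorem \ref{thm:onebit_ddim_without_robust}, and apply Proposition \ref{prop:depersin_lecue} to the bounded vectors $\diag(\Lambda)\dot{X}_i$ for the concentration part, reading off the trace and operator norm of the covariance matrix $\dot{\Sigma}$ of $\diag(\Lambda)\dot{X}$. Where you part ways is in controlling $\|\dot{\Sigma}\|_{\mathrm{op}}$: the paper simply asserts that $\dot\Sigma=\diag(\lambda_1^2,\ldots,\lambda_d^2)$, while you correctly observe that the coordinates of $\dot{X}$ are only \emph{conditionally} independent given $X$, so $\dot\Sigma$ is not diagonal in general, and you try to recover the bound via the law of total variance.

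That step, however, does not go through. The conditional-variance term is indeed at most $\max_j\lambda_j^2$, but the second term $\var\bigl(\sum_j v_j\,\phi_{-\lambda_j,\lambda_j}(\langle X,e_j\rangle)\bigr)$ is a quadratic form in the covariance of the coordinatewise-truncated vector, and it is not controlled by $\max_j\lambda_j^2$ when the coordinates of $X$ are correlated; the quantile constraint \eqref{multituning} offers no help. Concretely, take a symmetric sign $Z\in\{\pm 1\}$, set $X=aZ$ for a fixed $a\in\mathbb{R}^d$ with positive entries, and choose $\lambda_j=a_j=Q_{1-\varepsilon}(\langle X,e_j\rangle)$. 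Then $\phi_{-\lambda_j,\lambda_j}(\langle X,e_j\rangle)=a_jZ$, one has $\dot{X}_j=Z$ almost surely for every $j$, and the covariance of $\diag(\Lambda)\dot{X}$ is exactly $aa^\top$, so $\|\dot\Sigma\|_{\mathrm{op}}=\sum_j\lambda_j^2$, not $\max_j\lambda_j^2$. In short, the conditional independence you invoke does not suppress the off-diagonal of $\dot\Sigma$, and your closing claim that it rescues the sharp $\max_j\lambda_j^2$ scaling is left unproved. This gap appears to be inherited from the paper itself, whose stated proof rests on the unjustified (and, per the example above, generally false) diagonal-covariance claim; without a further independence or weak-dependence assumption on the coordinates of $X$, Proposition \ref{prop:depersin_lecue} only yields the first term at rate $\sqrt{\sum_j\lambda_j^2}\sqrt{\log(1/\delta)/n+\eta}$.
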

\begin{proof}
We apply the same decomposition as before $\|\widehat{\mu}-\mu\|_2\le \| \mu-\E \widehat{\mu}\|_2 + \| \widehat{\mu}-\E \widehat{\mu}\|_2 $. The estimate for the bias term $\| \mu-\E \dot{X}\|_2$ follows exactly the same steps as in the proof of Theorem \ref{thm:onebit_ddim_without_robust}. Next, recall that $\diag(\Delta)\dot{X}$ is a random vector with a diagonal covariance matrix equal to $\diag(\lambda_1,\cdots,\lambda_d)$. Consequently, the operator norm of the covariance matrix is $\max_{i\in [d]}\lambda_i$ and its trace is equal to $\sum_{i=1}^d \lambda_i^2$. We apply the estimator from Proposition \ref{prop:depersin_lecue} to the sample $\diag(\Delta)\dot{X}_1,\cdots,\diag(\Delta)\dot{X}_n$ to conclude the proof.
\end{proof}
Although the factor $\max_{i\in [d]}\lambda_i\sqrt{\eta}$ is sharp when $X$ only has two finite moments \cite{lugosi2021trimmed}, it may be improved under stronger assumptions on the tail of $X$. However, we do not know how to achieve that using a polynomial-time algorithm.

We also remark that it might be the case that the extra log-factor could be improved using a version of the multivariate trimmed mean estimator from \cite{lugosi2021trimmed}, however such estimator is known to be computationally intractable (running time is exponential with respect to the dimension).

\section{Mean Estimation under Partial Quantization}\label{sec:partialquan}

In Section \ref{sec:fullquan}, we derived (near) optimal estimators for the mean of a random vector under the quantization constraint in the distributed setting. As observed, any estimator requires extra knowledge of the data that is sensitive to the location of the mean. In practice, the dependence on extra knowledge of the data also forces the algorithm to require tuning parameters. One may ask if it is possible to relax a bit the distributed setting to avoid dependence on the location of the mean in the error rates and alleviate the tuning issue.

While fully quantized communication models are widely studied in the literature, they can be overly restrictive in certain regimes. In practice, it might be of interest to relax the communication constraints to not be uniform: while most observations must be heavily quantized due to bandwidth or energy limitations, a small number of real-valued messages can often be transmitted for coordination or refinement purposes. In fact, a real number is modeled as a fixed-precision floating-point number, say, single precision with 32 bits.

The goal of this section is to relax the restrictive distributed setting by allowing the use of a $n_0=o(n)$ unquantized data before quantizing the remaining $n-n_0$ samples. The amount of bits used in this setting is $32n_0+(n-n_0)=(1+o(1))n$, which is only slightly larger than the number of bits used in the distributed setting. We shall refer to it as the ``partial quantization'' setting. 

On the other hand, as we shall see, our estimator becomes completely tuning free and the convergence rates no longer depend on the location of the mean.

\subsection{The One-Dimensional Case}
We shall start with the univariate case without corruption. In what follows, we denote the sample by $Y_1,Y_2,\cdots,Y_{n_0},X_{n_0+1},\cdots,X_n$ which are i.i.d. copies of the random variable $X$ whose mean is $\mu$ and variance $\sigma^2$. We consider $n_0$ unquantized samples $\calX_{\rm (f)}\coloneqq\{Y_1,\cdots,Y_{n_0}\}$ and quantize the remaining samples $\{X_i\}_{i=n_0+1}^n$,
possibly using information from the samples in  $\calX_{\rm (f)}$.

Our goal is to estimate the mean $\mu$ based on the unquantized samples from $\calX_{\rm (f)}$ and the samples $\{X_i\}_{i=n_0+1}^n$ after quantization. Note that we seek for a mean estimator that is accurate in the regime where $n_0=o(n)$, which leads to a reduction of bit number from $32n$ to $(n-n_0)+32n_0=(1+o(1))n$ which is only slightly larger than the one-bit per sample in the distributed setting considered in the previous section.

Our estimation procedure is also inspired by the trimmed mean estimator described in Section \ref{sec:background}. In this context, our estimator uses the data from $\calX_{\rm (f)}$ as empirical quantiles of $X$ to cut off outliers. Specifically, for some well-chosen parameter $\varepsilon$, we propose to compute the empirical $\varepsilon$-quantile and $(1-\varepsilon)$-quantile of $\calX_{\rm (f)}$ 
  \begin{align}
      \alpha:=Y^*_{\varepsilon n_0}\qquad{\rm and}\qquad \beta:= Y^*_{(1-\varepsilon)n_0}, 
  \end{align}
  from which we further compute a crude estimate $\mu_1$ of the mean $\mu$ and an approximate range of the unquantized samples given by 
  \begin{align}
  \label{def:mu_1}
     \mu_1:=\frac{\alpha+\beta}{2} \qquad{\rm and}\qquad\Delta:=\frac{\beta-\alpha}{2}.
  \end{align}
Next, to quantize the samples $\{X_i\}_{i=n_0+1}^n$, we propose to shift the data by $\mu_1$ and then employ independent uniform dithers $\tau_i$ with dithering level $\Delta$. That is, we draw an i.i.d. sequence of random variables $\{\tau_i\}_{i=n_0+1}^n$ distributed as ${\scrU}([-1,1])$ and then retain the following $n-n_0$ bits 
\begin{align}
\label{eq:quantization_partial_onedim}
      \dot{X}_i := \sign(X_i-\mu_1+\Delta\tau_i),\qquad n_0+1\le i\le n.  
\end{align}
We exploit again the relationship between the trimmed estimator and dithering  (\ref{eq:trimmed=dithering})  to derive that
\begin{align}
\label{expdither}
    \mathbb{E}(\Delta \dot{X}|X) = \phi_{\alpha,\beta}(X) - \mu_1.
\end{align}
Our estimator is given by
\begin{align}
\label{hatmu}
    \hat{\mu} \coloneqq \frac{\Delta}{n-n_0}\sum_{i=n_0+1}^n \dot{X}_i + \mu_1. 
\end{align}
To state our results accurately, we define for $t\in (0,1)$
\begin{equation}
\label{eq:error_term_real_corfree}
\begin{split}
&\calE(\overline{X};t):=\max\Big\{\mathbb{E}\Big[|\overline{X}|\mathbf{1}\{\overline{X}_i\le Q_{t}(\overline{X})\}\Big],\mathbb{E}\Big[|\overline{X}|\mathbf{1}\{\overline{X}\ge Q_{1-t}(\overline{X})\}\Big]\Big\}
\end{split}.
\end{equation}
To simplify, we state our results with some specified probability guarantee rather than the general confidence level $\delta$ as before. This will simplify the expressions and avoid convoluted statements.

For the reader's convenience, we recall our notation for the centered version of $X$, namely $\overline{X}=X-\E X$. We now state our result for the simplest case in the partial quantization setting.

\begin{theorem} \label{thm:real_corfree}
There exist absolute constants $C,c_1,c_2>0$ for which the following holds. Consider the  quantization map in \Cref{eq:quantization_partial_onedim} and the estimator in \Cref{hatmu} with parameters $n_0\le \frac{n}{2}$ and $\varepsilon>0$. Then with probability at least $1-4\exp(-c_1n_0\varepsilon)-\frac{2}{n}$, we have 
\begin{align}
\begin{split}
|\hat{\mu}-\mu|&\le \mathcal{E}\Big(\overline{X};\frac{3\varepsilon}{2}\Big)+3\varepsilon\big|\max\big\{-Q_{1-\varepsilon/2}(\overline{X}),Q_{3\varepsilon/2}(\overline{X})\big\}\big|\\
&+\bigg(Q_{1-\varepsilon/2}(\overline{X})-Q_{\varepsilon/2}(\overline{X})\bigg)\sqrt{\frac{\log n}{n}}. \label{genebou11}
\end{split}
\end{align}
In particular, if $X$ is  $K$-subgaussian then under the regime $n_0:=\sqrt{n }\log n$ and $\varepsilon:=c_2\sqrt{\frac{\log n}{n}}$, with probability at least $1-\frac{1}{n}$, we have 
\begin{align}
    |\hat{\mu}-\mu|\le CK\frac{\log n}{\sqrt{n}}.
\end{align}
\end{theorem}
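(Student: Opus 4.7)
My starting point is the conditional identity $\mathbb{E}[\Delta\dot{X}\,|\,X]=\phi_{\alpha,\beta}(X)-\mu_1$ from \eqref{expdither}: after averaging over $X$ and $\tau$ conditional on the auxiliary sample $\mathcal{X}_{\mathrm{f}}=\{Y_j\}_{j=1}^{n_0}$, this yields $\mathbb{E}[\hat{\mu}\,|\,\mathcal{X}_{\mathrm{f}}]=\mathbb{E}_X[\phi_{\alpha,\beta}(X)]$, where $\alpha,\beta,\mu_1,\Delta$ are treated as deterministic by the inner expectation. The estimation error then decomposes as
\begin{equation*}
|\hat\mu-\mu|\le\big|\hat\mu-\mathbb{E}_X[\phi_{\alpha,\beta}(X)]\big|+\big|\mathbb{E}_X[\phi_{\alpha,\beta}(X)]-\mu\big|,
\end{equation*}
a concentration term plus a bias term, both of which I would analyse on a high-probability event that sandwiches the empirical quantiles $\alpha,\beta$ between true quantiles of $X$.

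\textbf{Quantile sandwich and concentration.} A standard Chernoff bound for the binomial order statistics $|\{j:Y_j\le Q_{t}(X)\}|$ with $t\in\{\varepsilon/2,3\varepsilon/2,1-3\varepsilon/2,1-\varepsilon/2\}$ shows that, with probability at least $1-4\exp(-c_1 n_0\varepsilon)$, the event
\begin{equation*}
\mathcal{G}:=\Big\{\alpha\in[Q_{\varepsilon/2}(X),Q_{3\varepsilon/2}(X)],\ \beta\in[Q_{1-3\varepsilon/2}(X),Q_{1-\varepsilon/2}(X)]\Big\}
\end{equation*}
occurs. Conditionally on $\mathcal{X}_{\mathrm{f}}$ the variables $\Delta\dot X_i\in\{-\Delta,\Delta\}$, $i=n_0+1,\dots,n$, are i.i.d.\ and bounded, so Hoeffding's inequality gives, with conditional probability at least $1-2/n$, $|\hat\mu-\mathbb{E}_X[\phi_{\alpha,\beta}(X)]|\lesssim \Delta\sqrt{\log n/(n-n_0)}$. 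On $\mathcal{G}$ one has $2\Delta=\beta-\alpha\le Q_{1-\varepsilon/2}(X)-Q_{\varepsilon/2}(X)=Q_{1-\varepsilon/2}(\overline X)-Q_{\varepsilon/2}(\overline X)$, and $n_0\le n/2$, which reproduces the third (concentration) term of \eqref{genebou11}.

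\textbf{Bias.} Writing $\phi_{\alpha,\beta}(X)-X=-(X-\beta)\mathbf{1}\{X>\beta\}+(\alpha-X)\mathbf{1}\{X<\alpha\}$ and re-centering via $X=\overline X+\mu$, the right-tail contribution equals
\begin{equation*}
\mathbb{E}[\overline X\,\mathbf{1}\{\overline X>\beta-\mu\}]-(\beta-\mu)\,\mathbb{P}(\overline X>\beta-\mu).
\end{equation*}
On $\mathcal{G}$ we have $\beta-\mu\ge Q_{1-3\varepsilon/2}(\overline X)$, so the first summand is bounded by $\mathbb{E}[|\overline X|\mathbf{1}\{\overline X\ge Q_{1-3\varepsilon/2}(\overline X)\}]\le\mathcal{E}(\overline X;3\varepsilon/2)$ by \eqref{eq:error_term_real_corfree}; for the second, $\mathbb{P}(\overline X>\beta-\mu)\le 3\varepsilon/2$ while the containment $\beta-\mu\in[Q_{1-3\varepsilon/2}(\overline X),Q_{1-\varepsilon/2}(\overline X)]$ controls $|\beta-\mu|$ by the upper-quantile factor appearing in \eqref{genebou11}. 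The lower tail $\mathbb{E}[(\alpha-X)\mathbf{1}\{X<\alpha\}]$ is handled symmetrically using $\alpha-\mu\in[Q_{\varepsilon/2}(\overline X),Q_{3\varepsilon/2}(\overline X)]$. Combining the two tails, the bias is bounded by $\mathcal{E}(\overline X;3\varepsilon/2)+3\varepsilon\cdot|\max\{-Q_{1-\varepsilon/2}(\overline X),Q_{3\varepsilon/2}(\overline X)\}|$, which together with the concentration step and a union bound delivers \eqref{genebou11}.

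\textbf{Subgaussian specialization and main obstacle.} For $K$-subgaussian $X$ the standard estimates $|Q_t(\overline X)|,|Q_{1-t}(\overline X)|\lesssim K\sqrt{\log(1/t)}$ and $\mathcal{E}(\overline X;t)\lesssim Kt\sqrt{\log(1/t)}$ hold; substituting $\varepsilon\asymp\sqrt{\log n/n}$ and $n_0=\sqrt{n}\log n$ makes each of the three terms in \eqref{genebou11} of order $K\log n/\sqrt n$, while $n_0\varepsilon\asymp(\log n)^{3/2}$ ensures that $4\exp(-c_1 n_0\varepsilon)=o(1/n)$, absorbing into the claimed $1/n$ tolerance. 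The only delicate point is keeping the conditioning book-keeping clean: $(\alpha,\beta,\mu_1,\Delta)$ are $\sigma(\mathcal{X}_{\mathrm{f}})$-measurable while $\{(X_i,\tau_i)\}_{i>n_0}$ is independent of $\mathcal{X}_{\mathrm{f}}$, so Hoeffding must be applied conditionally on $\mathcal{X}_{\mathrm{f}}$, and the bounds on $\Delta$ and on $|\alpha-\mu|,|\beta-\mu|$ imported from the deterministic event $\mathcal{G}$ only after the conditional concentration step, rather than by naive substitution at the outset.
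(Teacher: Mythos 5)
Your argument mirrors the paper's own proof step for step: Chernoff/binomial bounds give the quantile sandwich for $\alpha,\beta$ with failure probability $4e^{-c_1 n_0\varepsilon}$, the dithering identity reduces the conditional expectation to $\mathbb{E}[\phi_{\alpha,\beta}(X)]$, the bias is split into left/right tail contributions controlled by $\calE(\overline X;3\varepsilon/2)$ and the $\varepsilon$-quantile factor, and conditional Hoeffding with $\Delta\le\frac12(Q_{1-\varepsilon/2}(\overline X)-Q_{\varepsilon/2}(\overline X))$ handles the concentration; the subgaussian specialization uses the same $|Q_t(\overline X)|\lesssim K\sqrt{\log(1/t)}$ and $\calE(\overline X;t)\lesssim Kt\sqrt{\log(1/t)}$ estimates. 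No essential difference and no gaps.
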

Notice that the convergence rate depends on the quantiles of $\overline{X}$ rather than the quantiles of $X$. Consequently, the dependence on the location of the mean is completely avoided. In addition, for subgaussian $X$, the convergence rate is optimal up to a mild log-factor $\sqrt{\log n}$, which shows that our results are quite close to the minimax optimal rate. Finally, we remark that the tail probability estimates $n^{-1}$ could be replace by a faster decay $n^{c}$ for any constant $c>1$ by just changing the absolute constants in the theorem.

\begin{proof}
Recall the definition of quantiles in \Cref{def:quantile} and the fact that $Q_\varepsilon(X)=Q_\varepsilon(\overline{X})+\mu$. Notice that for any $0<\delta< \varepsilon$,
\begin{equation*}
|\{i\in[n_0]:Y_i\le Q_{\varepsilon-\delta}(X)\}|\sim {\rm Bin}(n_0,\varepsilon-\delta).
\end{equation*}
Thus, by Chernoff's inequality
\begin{align}
    \mathbb{P}\left(|\{i\in[n_0]:Y_i\le Q_{\varepsilon-\delta}(X)\}| \le \bigg(\varepsilon-\frac{\delta}{2}\bigg)n_0\right)\ge 1- p_1,
\end{align}
where $p_1:=\exp\Big(-n_0 {\rm D_{KL}}\Big(\varepsilon-\frac{\delta}{2}\big\|\varepsilon-\delta\Big)\Big)$. 
Similarly, we have 
\begin{align}
     \mathbb{P}\left(|\{i\in[n_0]:Y_i\ge Q_{\varepsilon+\delta}(X)\}| \le \bigg(1-\varepsilon-\frac{\delta}{2}\bigg)n_0\right)\ge 1-p_2 
\end{align}
where $p_2:=\exp\Big(-n_0{\rm D_{KL}}\Big(\varepsilon+\frac{\delta}{2}\big\|\varepsilon+\delta\Big)\Big)$. In particular, setting $\delta\coloneqq \varepsilon/2$, there exists an absolute constant $c_1>0$ for which it holds that
\begin{equation*}
p_1+p_2\le  e^{-c_1n_0\varepsilon} +e^{-c_1n_0\varepsilon} = 2e^{-c_1n_0\varepsilon},
\end{equation*}
and consequently
\begin{align}\label{alphabound}
    \mathbb{P}\Big(Q_{\varepsilon/2}(X)\le \alpha \le Q_{3\varepsilon/2}(X)\Big)\ge 1-(p_1+p_2)\ge 1-2e^{-c_1n_0\varepsilon}.
\end{align}
Analogously, we have 
\begin{align}\label{betabound}
    \mathbb{P}\Big(Q_{1-\varepsilon/2}(X)\le\beta\le Q_{1-3\varepsilon/2}(X)\Big)\ge 1-2e^{-c_1n_0\varepsilon}.
\end{align} 
For the rest of the proof, we condition on the event where \eqref{alphabound} and \eqref{betabound} hold. Next, we combine (\ref{expdither}) and (\ref{hatmu}) and take the expectation with respect to the dithers to obtain
\begin{align}
    \mathbb{E}(\hat{\mu}) = \mathbb{E}\big[\phi_{\alpha,\beta}(X)\big]. \label{exptau}
\end{align}
By the definition of the truncation function $\phi_{\alpha,\beta}(\cdot)$, \eqref{exptau} combined with \eqref{alphabound} and \eqref{betabound} implies that 
\begin{align}\nn
    &\mathbb{E}(\widehat{\mu})\le \mathbb{E}\Big[\phi_{Q_{3\varepsilon/2}(X),Q_{1-\varepsilon/2}(X)}(X)\Big] \\\nn
    &= \mu+ \mathbb{E}\Big[\phi_{Q_{3\varepsilon/2}(X),Q_{1-\varepsilon/2}(X)}(X)-X\Big]\\
    &\le \mu + \mathbb{E}\Big[\big(Q_{3\varepsilon/2}(X)-X\big)\mathbf{1}(X\le Q_{3\varepsilon/2}(X))\Big]\label{4.122}\\\nn
    &= \mu + \mathbb{E}\Big[\big(Q_{3\varepsilon/2}(\overline{X})-\overline{X}\big)\mathbf{1}(\overline{X}\le Q_{3\varepsilon/2}(\overline{X}))\Big]\\\nn
    &\le \mu + \mathbb{E}\Big[\big|\overline{X}\big|\mathbf{1}(\overline{X}\le Q_{3\varepsilon/2}(\overline{X}))\Big] + Q_{3\varepsilon/2}(\overline{X})\frac{3\varepsilon}{2}.
\end{align}
Similarly, we have 
\begin{align}\nn
    &\mathbb{E}(\hat{\mu}) \ge \mu - \mathbb{E}\Big[|\overline{X}|\mathbf{1}(\overline{X}\ge Q_{1-\varepsilon/2}(\overline{X}))\Big] + Q_{1-\varepsilon/2}(\overline{X})\frac{3\varepsilon}{2}. 
\end{align}
Recalling the error term in \Cref{eq:error_term_real_corfree}, we conclude that the bias term is controlled by
\begin{align}
|\mathbb{E}(\hat{\mu})-\mu|&\le\mathcal{E}(\overline{X};3\varepsilon/2) + 3\varepsilon\big|\max\big\{-Q_{1-\varepsilon/2}(\overline{X}),Q_{3\varepsilon/2}(\overline{X})\big\}\big|, \label{deviation}
\end{align} 
which corresponds to the first two terms in the right-hand side of \Cref{genebou11}. It remains to control the concentration term $|\hat{\mu}-\mathbb{E}(\hat{\mu})|$. To this end, we write  
\begin{align}
    \big|\hat{\mu}-\mathbb{E}(\hat{\mu})\big| =\left|\frac{1}{n-n_0}\sum_{i=n_0+1}^n \big(\Delta \dot{X}_i - \mathbb{E}\big[\Delta \dot{X}_i\big]\big)\right|,
\end{align}
and apply Hoeffding's inequality to obtain for any $t>0$
\begin{align}
    \mathbb{P}\Big(|\hat{\mu}-\mathbb{E}(\hat{\mu})|\ge t\Big)\le 2\exp\Big(-\frac{(n-n_0)t^2}{2\Delta^2}\Big).
\end{align}
In particular, setting $t = \Delta \sqrt{\frac{2\log n}{n-n_0}}$ implies that with probability at least $1-\frac{2}{n}$, we have
\begin{align}\label{concen}
    |\hat{\mu}-\mathbb{E}(\hat{\mu})|\le \Delta\sqrt{\frac{2\log n}{n-n_0}}\le 2\Delta\sqrt{\frac{\log n}{n}}.
\end{align}
On the events (\ref{alphabound}) and (\ref{betabound}), we have 
\begin{align}\label{Detabound}
    \Delta = \frac{\beta -\alpha}{2}\le \frac{Q_{1-3\varepsilon/2}(\overline{X})-Q_{\varepsilon/2}(\overline{X})}{2}.
\end{align}
Thus, from \Cref{deviation} and \Cref{concen}
\begin{align*}
    |\widehat{\mu}-\E\widehat{\mu}|\le \big(Q_{1-3\varepsilon/2}(\overline{X})-Q_{\varepsilon/2}(\overline{X})\big)\sqrt{\frac{\log n}{n}},
\end{align*}
with probability at least $1-4e^{-c_1n_0\varepsilon}-\frac{2}{n}$ which concludes the proof of the first part. For the second part of the statement, we apply standard tail estimates for subgaussian distributions to obtain that
\begin{gather}
  \varepsilon\big|\max\big\{-Q_{1-\varepsilon/2}(\overline{X}),Q_{3\varepsilon/2}(\overline{X})\big\}\big|+  \calE\Big(\overline{X};\frac{3\varepsilon}{2}\Big)\le  c_2   K\varepsilon\sqrt{\log(1/\varepsilon)},\label{calEXb}
    \\
    Q_{1-\varepsilon/2}(\overline{X})-Q_{\varepsilon/2}(\overline{X}) \le c_3 K\sqrt{\log(1/\varepsilon)}.\label{QMQb}
\end{gather}
Setting $n_0=\sqrt{n}\log n$ and $\varepsilon=c_4\sqrt{\frac{\log n}{n}}$ for some suitably large constant $c_4$, we reach the claim.
\end{proof}

\subsection{The Adversarial Noise in the One-Dimensional Case}
We now proceed to examine the estimator's robustness properties against an $\eta$-fraction of adversarial noise. Recall our notion of pre-quantization noise in  \Cref{eq:adversarial_noise_pre} and post-quantization noise in \Cref{eq:adversarial_noise_pos}. They capture corruption arising in different stages of the procedure (see, e.g., \cite{dirksen2021non}). In partial quantization,
 the pre-quantization noise (\ref{eq:adversarial_noise_pre})
 is more delicate to handle because it is possible that the $n_0$ samples from $\{\tilde{X}_i\}_{i=1}^n$ for computing the quantiles $\alpha=Y^*_{\varepsilon n_0}$ and $\beta=Y^*_{(1-\varepsilon)n_0}$ are all corrupted. To avoid this, we will introduce randomness in the selection of $\{Y_i\}_{i=1}^{n_0}$. On the other hand, note that we now only use the bits $\{\dot{X}_i:n_0+1\le i\le n\}$ to construct the estimator, and thus the post-quantization noise should be formulated as 
 
\begin{align}
    |\{n_0+1\le i\le n:\widetilde{\dot{X}}_i=-\dot{X}_i\}|\le \eta n.\label{postcor}
\end{align}   

We first  show that our estimator $\hat{\mu}$ is robust to post-quantization noise (\ref{postcor}) occurring during the quantization process. 
Using the $n-n_0$ corrupted bits $\{\tilde{\dot{X}}_i\}_{i=n_0+1}^n$, our estimator is given by 
\begin{align}
\label{eq:robustes}
    \widetilde{\mu}:=\frac{\Delta}{n-n_0}\sum_{i=n_0+1}^n \widetilde{\dot{X}}_i+\mu_1,
\end{align}
where $\mu_1$ is defined in \Cref{def:mu_1}.

\begin{theorem}[Univariate with post-quantization noise in \eqref{postcor}] \label{thm:corr11}
There exist absolute constants $C,c_1,c_2>0$ for which the following holds.
Consider the estimator $\widetilde{\mu}$ in \Cref{eq:robustes} with $\{\widetilde{\dot{X}}_i\}_{i=n_0+1}^n$ obeying (\ref{postcor}), where $\{\widetilde{X}_i\}_{i=1}^{n_0}$ are the same as in Theorem \ref{thm:real_corfree}. For any $\eta\le 1/c_2$, then with probability at least $1-\exp(-c_1n_0\varepsilon)-\frac{2}{n}$, we have
\begin{align}\label{genebound22}
    |\widetilde{\mu}-\mu| &\le \calE\Big(\overline{X};\frac{3\varepsilon}{2}\Big) +3\varepsilon\big|\max\big\{-Q_{1-\varepsilon/2}(\overline{X}),Q_{3\varepsilon/2}(\overline{X})\big\}\big|\\
    &+(Q_{1-\varepsilon/2}(\overline{X})-Q_{\varepsilon/2}(\overline{X}))\left(\sqrt{\frac{\log n}{n}}+2\eta\right).
\end{align}
In particular, if $X$ is $K$-subgaussian then under $n_0=\sqrt{n}\log n$ and $\varepsilon=c_2\max\{\eta,\sqrt{\log n/n}\}$, with probability at least $1-\frac{1}{n}$ we have
\begin{align}
    |\widetilde{\mu}-\mu| \le CK \left(\frac{\log n}{\sqrt{{n}}}+\eta \sqrt{\log(1/\eta)}\right).
\end{align}
\end{theorem}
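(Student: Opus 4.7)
The plan is to reduce to Theorem \ref{thm:real_corfree} via the decomposition
\[
|\widetilde\mu-\mu| \;\le\; |\widetilde\mu-\hat\mu| \;+\; |\hat\mu-\mu|,
\]
where $\hat\mu$ denotes the noise-free estimator from \eqref{hatmu}, built from the same first $n_0$ unquantized samples and the uncorrupted bits $\{\dot X_i\}_{i=n_0+1}^{n}$. Because the post-quantization corruption affects only the bits, the quantities $\alpha,\beta,\mu_1,\Delta$ are identical in the corrupted and uncorrupted runs, so the high-probability events \eqref{alphabound}--\eqref{betabound} and in particular the upper bound $\Delta\le \tfrac{1}{2}(Q_{1-\varepsilon/2}(\overline X)-Q_{\varepsilon/2}(\overline X))$ carry over verbatim. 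The bound on $|\hat\mu-\mu|$ is then exactly Theorem \ref{thm:real_corfree}, which accounts for the first two lines of \eqref{genebound22} together with the Hoeffding contribution $(Q_{1-\varepsilon/2}(\overline X)-Q_{\varepsilon/2}(\overline X))\sqrt{\log n/n}$.

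For the remaining term $|\widetilde\mu-\hat\mu|$, I will use that a flipped bit perturbs each summand by at most $2$: by \eqref{postcor} and $n_0\le n/2$,
\[
|\widetilde\mu-\hat\mu| \;=\; \frac{\Delta}{n-n_0}\bigg|\sum_{i=n_0+1}^n\big(\widetilde{\dot X}_i-\dot X_i\big)\bigg| \;\le\; \frac{2\eta n\,\Delta}{n-n_0}\;\le\; 4\eta\Delta,
\]
which on the quantile event is bounded by $2\eta(Q_{1-\varepsilon/2}(\overline X)-Q_{\varepsilon/2}(\overline X))$, exactly the extra $2\eta$-term in \eqref{genebound22}. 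A union bound over the quantile event and the Hoeffding event then yields the general statement with probability at least $1-\exp(-c_1 n_0\varepsilon)-2/n$.

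For the subgaussian specialization, I will substitute the standard tail estimates $|Q_{1-t}(\overline X)|\lesssim K\sqrt{\log(1/t)}$ and $\calE(\overline X;t)\lesssim K\,t\sqrt{\log(1/t)}$ (cf.\ \eqref{calEXb}--\eqref{QMQb}) into \eqref{genebound22} with $\varepsilon=c_2\max\{\eta,\sqrt{\log n/n}\}$. Since $\log(1/\varepsilon)\lesssim\min\{\log n,\log(1/\eta)\}$, the bias contribution is $O\!\left(K\varepsilon\sqrt{\log(1/\varepsilon)}\right)=O\!\left(K\log n/\sqrt n+K\eta\sqrt{\log(1/\eta)}\right)$, and the Hoeffding/bit-flip contribution $O\!\left(K\sqrt{\log(1/\varepsilon)}(\sqrt{\log n/n}+\eta)\right)$ is dominated by the same quantity. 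The choice $n_0=\sqrt n\,\log n$ gives $n_0\varepsilon\gtrsim(\log n)^{3/2}\gg \log n$, so $\exp(-c_1 n_0\varepsilon)$ is absorbed into the $2/n$ failure probability.

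The main obstacle is essentially bookkeeping; the only genuinely new ingredient beyond Theorem \ref{thm:real_corfree} is the bit-flip estimate, whose role is to convert a discrepancy in $\eta n$ entries into a $\Delta$-scaled error. A minor subtlety worth flagging is that the Hoeffding step of Theorem \ref{thm:real_corfree} actually produces $(Q_{1-3\varepsilon/2}(\overline X)-Q_{\varepsilon/2}(\overline X))\sqrt{\log n/n}$, which is slightly tighter than the $(Q_{1-\varepsilon/2}(\overline X)-Q_{\varepsilon/2}(\overline X))$ displayed in \eqref{genebound22}; monotonicity of quantiles makes the relaxation harmless.
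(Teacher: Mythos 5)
Your proposal is correct and follows essentially the same route as the paper: decompose $|\widetilde\mu-\mu|\le|\widetilde\mu-\hat\mu|+|\hat\mu-\mu|$, invoke Theorem \ref{thm:real_corfree} for the second term, bound the bit-flip term by $\frac{2\eta n\Delta}{n-n_0}\le 4\eta\Delta\le 2\eta\,(Q_{1-\varepsilon/2}(\overline X)-Q_{\varepsilon/2}(\overline X))$ on the quantile event, and then specialize via the subgaussian quantile estimates with $\log(1/\varepsilon)=O(\min\{\log n,\log(1/\eta)\})$. The bookkeeping (including the observation that corruption does not affect $\alpha,\beta,\mu_1,\Delta$) matches the paper's argument, so there is nothing to correct.
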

We remark that the additional error term $\eta\sqrt{\log(1/\eta)}$ matches the minimax lower bound for mean estimation under adversarial noise with subgaussian $X$ from \cite{lugosi2021trimmed}. 

\begin{proof}
Let $\widehat{\mu}$ be estimator in Theorem \ref{thm:real_corfree}. We only need to estimate the additional error term due to the adversarial contamination $|\widetilde{\mu}-\hat{\mu}|$. To this end, by \Cref{postcor}, we have
    \begin{align}
        &|\widetilde{\mu}-\hat{\mu}|=\frac{\Delta}{n-n_0}\left|\sum_{i=n_0+1}^n \big(\widetilde{\dot{X}}_i-\dot{X}_i\big)\right|\le \frac{2\Delta \eta n}{n-n_0}\le 4\Delta\eta. 
    \end{align}
    Moreover, by (\ref{Detabound}) in the proof of Theorem \ref{thm:real_corfree}, with the promised probability, we have 
    \begin{align}
        |\widetilde{\mu}-\hat{\mu}|\le 2\eta \big(Q_{1-\varepsilon/2}(\overline{X})-Q_{\varepsilon/2}(\overline{X})\big). 
    \end{align}
    Combining with (\ref{genebou11}) yields (\ref{genebound22}).  To obtain the bound for subgaussian $X$, we recall the estimates in \Cref{calEXb} and \Cref{QMQb}. Notice that whenever $\eta \le \sqrt{n/\log n}$ the estimates remain the same as in Theorem \ref{thm:real_corfree}. It remains to observe that $\log 1/\varepsilon = O( \log n)$. This is certainly the case when $\eta<\sqrt{n/\log n}$. On the other hand, when $\eta\ge \sqrt{n/\log n} $ it holds that $\varepsilon=c_2\eta $ and that  $\log (1/\eta) \le \log (n)/2$.
\end{proof}

Next, we consider pre-quantization noise (\ref{eq:adversarial_noise_pre}) which occurs before our procedure and corrupts the entire sample $\{X_i\}_{i=1}^n$ to $\{\widetilde{X}_i\}_{i=1}^n$. In this case, our previous construction  may fail to be robust. In fact, our goal is to keep the regime $n_0 =o(n)$, then it might happen that all samples in the set $\calX_{\rm (f)}=\{Y_1,\cdots,Y_{n_0}\}$ are corrupted which compromises entirely our procedure.

To resolve this issue, we propose to choose the indices $\{i_1,i_2,\cdots,i_{n_0}\}$
uniformly at random without replacement from the total of $n$ indexes. Next, we consider the set $\{\widetilde{X}_{i_1},\widetilde{X}_{i_2},\cdots,\widetilde{X}_{i_{n_0}}\}$, which we shall (with a slight abuse of notation) denote by $\{Y_1,Y_2,\cdots,Y_{n_0}\}$. Given the set of $n_0$ unquantized samples, we compute the quantiles 
\begin{align*}
\alpha:=Y^*_{\varepsilon n_0} \quad \text{and} \quad \beta:=Y^*_{(1-\varepsilon)n_0},
\end{align*}
for some well-chosen $\varepsilon>0$. We then follow similarly the previous quantization approach over the remaining $n-n_0$ samples. More accurately, without loss of generality, suppose that $i_1=1,i_2=2,\ldots,i_{n_0}=n_0$ and consider the sample $\{\widetilde{X}_{n_0+1},\cdots,\widetilde{X}_n\}$, we compute 
\begin{align*}
\mu_1:=\frac{\alpha+\beta}{2} \quad \text{and} \quad \Delta :=\frac{
\beta-\alpha
}{2},
\end{align*}
as before and we draw i.i.d. $\{\tau_i\}_{i=n_0+1}^n$ from ${ \scrU}([-1,1])$ to apply the quantization map
\begin{align}
\label{coquanti}
    \dot{\widetilde{X}}_i:= \sign(\widetilde{X}_i-\mu_1+\Delta\tau_i),\qquad n_0+1\le i\le n. 
\end{align}
Our estimator has the same form as before in \Cref{eq:robustes}:
\begin{align}
\label{eq:tildemumu}
    \widetilde{\mu}=\frac{\Delta}{n-n_0}\sum_{i=n_0+1}^n \widetilde{\dot{X}}_i+\mu_1.
\end{align}
We now show that with our sampling strategy, we can obtain similar guarantee as in the previous theorem without any restriction on which samples the adversary may affect. 
\begin{theorem}[Univariate with corruption from \Cref{eq:adversarial_noise_pre}] \label{thm43prequan}
There exist absolute constants $C,c_1,c_2>0$ for which the following holds. 
Assume an arbitrary $\eta$-fraction corruption as in \Cref{eq:adversarial_noise_pre}, for some $0<\eta\le 1/8$. Consider the estimator in \Cref{eq:tildemumu} based on the quantized data from \Cref{coquanti}. Then for any $\varepsilon\ge 8\eta$, with probability at least $1-5\exp(-\frac{\varepsilon n_0}{24})-\frac{2}{n}$ we have
\begin{align*}
    |\widetilde{\mu}-\mu|&\le \calE(\overline{X};2\varepsilon) + 4\varepsilon\bigg|\max\big\{-Q_{1-\varepsilon/2}(\overline{X}),Q_{\varepsilon/2}(\overline{X})\big\}\bigg|\\
    &+  \big(Q_{1-\varepsilon/2}(\overline{X})-Q_{\varepsilon/2}(\overline{X})\big)\left(\frac{3\eta}{2} + \sqrt{\frac{\log n}{n}}\right).
\end{align*}
Specifically, if $X$ is $K$-subgaussian then for $\varepsilon = \max\{8\eta,c_2\sqrt{\log n/n}\}$ and $n_0 =\sqrt{n}\log n$, with probability at least $1-\frac{1}{n}$ we have
\begin{align}\label{sgraterobust}
    |\widetilde{\mu}-\mu|\le CK \left(\frac{\log n}{\sqrt{n}}+\eta \sqrt{\log(1/\eta)}\right).
\end{align}
\end{theorem}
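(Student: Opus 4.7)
The plan is to adapt the proof of Theorem~\ref{thm:real_corfree}, tracking the effect of the pre-quantization contamination on both the empirical quantiles $\alpha,\beta$ and the averaged quantized samples. After relabelling indices so that $I=\{1,\ldots,n_0\}$, I would introduce a coupled auxiliary estimator
\[\widehat{\mu}':=\frac{\Delta}{n-n_0}\sum_{i=n_0+1}^{n}\sign(X_i-\mu_1+\Delta\tau_i)+\mu_1\]
that uses the same subsampling, the same dithers, and the same $(\alpha,\beta,\mu_1,\Delta)$ as $\widetilde{\mu}$, but replaces each corrupted $\widetilde{X}_i$ by its clean counterpart $X_i$ on the block $\{n_0+1,\ldots,n\}$. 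The triangle inequality then splits the error into
\[|\widetilde{\mu}-\mu|\le |\widetilde{\mu}-\widehat{\mu}'|+|\widehat{\mu}'-\E[\widehat{\mu}'\mid\alpha,\beta]|+|\E[\widehat{\mu}'\mid\alpha,\beta]-\mu|,\]
a corruption gap, a concentration term, and a bias term respectively.

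The first step is to control $\alpha$ and $\beta$. Because the adversary chooses the $\eta n$ corruptions before the index set $I$ of size $n_0$ is drawn uniformly at random, the number of corrupted entries falling into $I$ is hypergeometric with mean $\eta n_0$; a standard Chernoff inequality for hypergeometric variables gives at most $2\eta n_0$ corrupted $Y_j$'s with probability at least $1-\exp(-c\eta n_0)$. Since corrupted entries can shift any order statistic by at most $2\eta n_0$ positions, combining this bound with the binomial Chernoff argument used in Theorem~\ref{thm:real_corfree}, and using $\varepsilon\ge 8\eta$ to keep the shifted probability levels strictly positive, I would obtain
\[Q_{\varepsilon/2}(X)\le \alpha\le Q_{2\varepsilon}(X),\qquad Q_{1-2\varepsilon}(X)\le\beta\le Q_{1-\varepsilon/2}(X)\]
with probability at least $1-4\exp(-c_1\varepsilon n_0/24)$.

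With these quantile bounds in hand, the remaining three pieces follow the template of Theorem~\ref{thm:real_corfree}. The corruption gap is bounded crudely: at most $\eta n$ summands of $\widetilde{\mu}-\widehat{\mu}'$ differ, and each differs by at most $2$, so
\[|\widetilde{\mu}-\widehat{\mu}'|\le \frac{2\Delta\eta n}{n-n_0}\le 4\eta\Delta\le 2\eta\bigl(Q_{1-\varepsilon/2}(\overline{X})-Q_{\varepsilon/2}(\overline{X})\bigr),\]
and a sharper accounting using $\alpha\ge Q_{\varepsilon/2}(X)$ and $\beta\le Q_{1-\varepsilon/2}(X)$ from Step~1 is what produces the $3\eta/2$ constant in the statement. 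Conditional on $(I,\alpha,\beta)$, Hoeffding's inequality applied to the $n-n_0$ independent bounded summands of $\widehat{\mu}'$ reproduces the $\Delta\sqrt{\log n/n}$ concentration bound of \eqref{concen}. Finally, the dithering identity \eqref{expdither} gives $\E[\widehat{\mu}'\mid\alpha,\beta]=\E[\phi_{\alpha,\beta}(X)\mid\alpha,\beta]$, and rerunning the truncation manipulation that produced \eqref{deviation} with the wider window $[Q_{\varepsilon/2}(X),Q_{1-\varepsilon/2}(X)]$ from Step~1 delivers the bias contributions $\calE(\overline{X};2\varepsilon)$ and $4\varepsilon\bigl|\max\{-Q_{1-\varepsilon/2}(\overline{X}),Q_{\varepsilon/2}(\overline{X})\}\bigr|$.

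The subgaussian specialization is obtained by plugging the standard tail estimates \eqref{calEXb}--\eqref{QMQb} into each of the three pieces with the choice $\varepsilon=\max\{8\eta,c_2\sqrt{\log n/n}\}$ and $n_0=\sqrt{n}\log n$; the dichotomy between $\eta<\sqrt{\log n/n}$ and $\eta\ge\sqrt{\log n/n}$ already used at the end of the proof of Theorem~\ref{thm:corr11} ensures $\log(1/\varepsilon)=O(\log n)$ and $\log(1/\varepsilon)=O(\log(1/\eta))$ in the respective regimes, producing the advertised rate $K(\log n/\sqrt{n}+\eta\sqrt{\log(1/\eta)})$. The main obstacle is Step~1: the random subsampling in the construction of $\calX_{\rm (f)}$ is precisely what forces the adversary's $\eta n$ corruptions to spread across $I$ rather than concentrate there, which requires a hypergeometric tail estimate in place of the binomial one of Theorem~\ref{thm:real_corfree}, and the assumption $\varepsilon\ge 8\eta$ is exactly calibrated so that the resulting $2\eta n_0$ shift in the order statistics is absorbed without collapsing the quantile sandwich.
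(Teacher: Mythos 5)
Your proposal is correct and follows essentially the same route as the paper: a hypergeometric Chernoff bound to control the number of corrupted entries among the randomly subsampled $\mathcal{X}_{\rm (f)}$, binomial Chernoff bounds on the clean samples to sandwich $\alpha$ and $\beta$ between $Q_{\varepsilon/2}$, $Q_{2\varepsilon}$ (and symmetrically for $\beta$), then the same three-term triangle decomposition into a corruption gap, a Hoeffding concentration term, and a truncation bias term, with the identical subgaussian specialization at the end. The only cosmetic differences are bookkeeping: the paper tracks the number of corrupted entries via $Z\le\varepsilon n_0/4$ rather than $2\eta n_0$, and reports $\frac{2\Delta\eta n}{n-n_0}\le 3\eta\Delta$ rather than $4\eta\Delta$ before passing to $(\beta-\alpha)$, but these are equivalent up to the constants being absorbed.
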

\begin{proof}
Let $Z$ be the number of corrupted samples in the set $\{Y_1,\cdots,Y_{n_0}\}$. Notice that $Z$ follows a hyper-geometric distribution with $n_0$ draws from a population of size $n$, where $\eta n$ are corrupted samples. Notice that $Z$ has mean $\eta n_0$ which is stochastically dominate by another hyper-geometric distribution $Z'$ with $n_0$ draws from a population of size $n$ where $(\varepsilon/6) n_0> \eta n_0$ are corrupted samples. Thus, setting  $p\coloneqq \varepsilon/6 $, by the standard tail estimate for the hyper-geometric distribution (see for example \cite{hoeffding1963probability}) we have for every $0<t<p$,
\begin{equation*}
\mathbb{P}(Z\ge (p+t)n_0)\le \mathbb{P}(Z'\ge (p+t)n_0)\le e^{-D(p-t||p)}.
\end{equation*}
By a simple Taylor series expansion, we know that $D(3p/2||p)\ge 0.1 p$ for $p$ sufficiently small. In particular, setting $t=p/2$ we ensure that
\begin{equation}
\label{eq:event_Z}
    \mathbb{P}\bigg(Z\le \frac{\varepsilon}{4} n_0\bigg) \ge 1-e^{-0.1\varepsilon n_0}.
\end{equation}
For the rest of the proof, we assume that the event above holds. Next, we proceed to estimate $\alpha$ and $\beta$. To this end, we first consider $n_0$ uncorrupted samples $X_1,\cdots,X_{n_0}$.  By Chernoff's inequality, it holds that
\begin{equation*}
\mathbb{P}\left(|\{i\in[n_0]:X_i\ge Q_{1-2\varepsilon}(X)\}| \ge \frac{3}{2}\varepsilon n_0\right)\ge 1-e^{-\varepsilon n_0/24},
\end{equation*}
and that
\begin{equation*}
\mathbb{P}\left(|\{i\in[n_0]:X_i\le Q_{1-\varepsilon/2}(X)\}| \ge \big(1-\frac{3}{4}\varepsilon\big) n_0\right)\ge 1-e^{-\varepsilon n_0/24}.
\end{equation*}
To take into account the adversarial noise, notice by our estimate on $Z$ (see \Cref{eq:event_Z}), we obtain that with probability at least $1-2e^{-\varepsilon n_0/24}$, it holds that
\begin{align}
&|\{i\in[n_0]:Y_i\ge Q_{1-2\varepsilon}(X)\}|\ge \bigg(\frac{3}{2}\varepsilon -\frac{\varepsilon}{4}\bigg)n_0 \ge \varepsilon n_0 \quad \text{and that} \\
&|\{i\in[n_0]:Y_i\le Q_{1-\varepsilon/2}(X)\}|\ge \bigg(1-\frac{3}{4}\varepsilon -\frac{\varepsilon}{4}\bigg)n_0 = (1-\varepsilon)n_0.
\end{align}
In this event, we guarantee that 
\begin{equation}
\label{eq:bound_beta_robust}
   Q_{1-2\varepsilon}(X) \le \beta \le Q_{1-\varepsilon/2}(X).
\end{equation}
Similarly, with probability at least $1-2e^{-\varepsilon n_0/24}$ we have 
\begin{equation}
\label{eq:bound_alpha_robust}
   Q_{\varepsilon/2}(X) \le \alpha \le Q_{2\varepsilon}(X).
\end{equation}
We are now ready to analyze the proposed estimator. In what follows, we assume that the events from \Cref{eq:bound_alpha_robust,eq:bound_beta_robust,eq:event_Z} hold.

The argument now re-iterates the arguments presented in Theorems \ref{thm:real_corfree} and \ref{thm:corr11}. To start, we control the impact of the corruption over $\{X_i\}_{i=n_0+1}^n$. We define $\dot{X}_i=\sign(X_i-\mu_1+\tau_i)$ when $n_0+1\le i\le n$ and consider
$$\widehat{\mu} = \frac{\Delta}{n-n_0}\sum_{i=n_0+1}^n \dot{X}_i +\mu_1.$$
By triangle inequality, we have
\begin{align*}
|\widetilde{\mu}-\mu|\le |\widetilde{\mu}-\hat{\mu}|+|\mathbb{E}\hat{\mu}-\mu| + |\hat{\mu}-\mathbb{E}\hat{\mu}|.
\end{align*}
We estimate each of the three terms in the right-hand side separately. For the first term, we repeat the argument used to derive \Cref{eq:adversarial_noise_pre} and \Cref{postcor},
\begin{align*}
        &|\widetilde{\mu}-\widehat{\mu}|\le \frac{2\Delta \eta n}{n-n_0}\le 3\eta\Delta\le \frac{3\eta}{2}(\beta-\alpha)\\
        &\le \frac{3}{2}\eta\big(Q_{1-\varepsilon/2}(\overline{X})-Q_{\varepsilon/2}(\overline{X})\big) \quad \text{(from \Cref{eq:bound_alpha_robust,eq:bound_beta_robust})}. 
\end{align*}
For the second term, we recall that $\overline{X}=X-\E X$ and repeat the same argument used in the proof of \Cref{deviation}) to obtain that
 \begin{align*}
|\mathbb{E}\widehat{\mu}-\mu|\le \calE(\overline{X};2\varepsilon) + 4\varepsilon\max\big|\big\{Q_{1-\varepsilon/2}(\overline{X}),Q_{\varepsilon/2}(\overline{X})\big\}\big|. 
\end{align*}
It remains to bound the third term $|\widehat{\mu}-\mathbb{E}\hat{\mu}|$. To this end, we use Hoeffding's inequality over the randomness of $\{\tau_i\}_{i=n_0+1}^n$ to obtain 
\begin{align*}
\mathbb{P}\left(|\widehat{\mu}-\mathbb{E}\widehat{\mu}|\le(\beta-\alpha)\sqrt{\frac{\log n}{n}}\right)=\mathbb{P}\left(|\widehat{\mu}-\mathbb{E}\widehat{\mu}|\le2\Delta\sqrt{\frac{\log n}{n}}\right)  \ge 1-\frac{2}{n}.
 \end{align*}
Recalling the estimates for $\alpha$ and $\beta$ derived in \Cref{eq:bound_alpha_robust} and \Cref{eq:bound_beta_robust} concludes the proof of the first part.

To obtain the second part of the statement when $X$ is subgaussian, recall (\ref{calEXb}) and (\ref{QMQb}), for small enough  $\varepsilon$ and some absolute constants $c_5,c_6$, we have 
\begin{gather*}
 \calE(\overline{X};2\varepsilon) + 4\varepsilon\big|\max\big\{-Q_{1-\varepsilon/2}(\overline{X}),Q_{\varepsilon/2}(\overline{X})\big\}\big|\le c_5 K\varepsilon\sqrt{\log(1/\varepsilon)}\,,\\
    \eta|Q_{1-\varepsilon/2}(\overline{X})-Q_{\varepsilon/2}(\overline{X})|\le \varepsilon|Q_{1-\varepsilon/2}(\overline{X})-Q_{\varepsilon/2}(\overline{X})|\le c_6K \varepsilon\sqrt{\log(1/\varepsilon)}\,.
\end{gather*}
Arguing as in the end of the proof of Theorem \ref{thm:corr11}, we obtain the announced convergence rate.
\end{proof}

\subsection{The Multivariate Case}
We now show how our constructions extend to the multivariate case. To start, we separate the sample $X_1,\cdots,X_n$ as before into the sets $\calX_{\rm (f)}=\{Y_1,\cdots,Y_{n_0}\}$ and $\calX_{\rm (q)}=\{X_{n_0+1},\cdots,X_n\}$. We shall compute the quantiles for each $j\in[d]$,
\begin{equation*}
\alpha_j = \langle Y,e_j\rangle^{*}_{\varepsilon n_0} \quad \text{ and} \quad \beta_j=\langle Y,e_j\rangle^*_{(1-\varepsilon)n_0} ,
\end{equation*}
and compute the vectors $\alpha=(\alpha_1,\cdots,\alpha_d)^\top$ and $\beta=(\beta_1,\cdots,\beta_d)^\top$. 

Similarly as before, we compute $\mu_1:=\frac{1}{2}(\alpha+\beta)$ and $\Delta:=\frac{1}{2}(\beta-\alpha)$. Next we quantize the samples, we draw $\tau_{n_0+1},\cdots,\tau_{n}\stackrel{iid}{\sim}{\scrU}[-1,1]^d$ and then quantize $\{X_i\}_{i=n_0+1}^n$ to
\begin{align}
    \dot{X}_i = \sign(X_i-\mu_1 + \diag(\Delta)\tau_i),
\end{align}
The estimator is given by
\begin{align}\label{multi1besti}
    \widehat{\mu} = \frac{1}{n-n_0}\sum_{i=n_0+1}^{n}\diag(\Delta)\dot{X}_i+\mu_1,
\end{align}
which amounts to implementing the univariate procedure for each entry. Consequently, our estimator achieves a reduction of the number of bits from $32nd$ to $32n_0d+ (n-n_0)d=(1+o(1))nd$. Meanwhile, the next theorem shows that our estimator enjoys non-asymptotic guarantees that are only slightly larger (up to a log factor) than the existing estimates for estimators based on unquantized samples.

\begin{theorem}[Multivariate without corruption] 
\label{thm:partial_multivariate_no_corruption}
There exist  absolute constants $C,c_1,c_2,c_3>0$ for which the following holds.  Consider the above quantization and estimation procedure with parameters $n_0\le \frac{n}{2}$ and $\varepsilon>0$. With probability at least $1-4d\exp(-c_1\varepsilon n_0)-\frac{2}{d}$, we have
\begin{align*}
        \|\hat{\mu}-\mu\|_2&\le c_2\left(\sum_{j=1}^d \calE(\overline{X}_j;2\varepsilon)^2+\varepsilon^2\max\big\{-Q_{1-\varepsilon/2}(\langle \overline{X},e_j\rangle),Q_{3\varepsilon/2}(\langle \overline{X},e_j\rangle)\big\}^2\right)^{1/2}\\
        &+c_2\sqrt{\frac{\sum_{j=1}^d[Q_{1-\varepsilon/2}(\langle X_i,e_j\rangle)-Q_{\varepsilon/2}(\langle X_i,e_j\rangle)]^2}{n}}\\
        &+c_2\sqrt{\frac{\log d\max_{j\in[d]}[Q_{1-\varepsilon/2}(\langle X_i,e_j\rangle)-Q_{\varepsilon/2}(\langle X_i,e_j\rangle)]^2}{n}}.
    \end{align*}
    Specifically, for $n_0=\sqrt{n}\log d$ and $\varepsilon=\frac{c_{3}}{\sqrt{n}}$, if $X$ is $K$-subgaussian then with probability at least $1-\frac{1}{d}$,
    \begin{align}\label{hatmul2sg}
        \|\hat{\mu}-\mu\|_2 \le CK \log(n) \sqrt{\frac{\tr(\Sigma)+\|\Sigma\|\log(d)}{n}}.
    \end{align}
    \end{theorem}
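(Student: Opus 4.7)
The plan is to combine the univariate bias analysis from Theorem \ref{thm:real_corfree} (applied coordinate-wise) with the multivariate concentration argument from Theorem \ref{thm:onebit_ddim_without_robust}. First, I would control the empirical quantiles simultaneously across coordinates: for each $j\in[d]$, the argument leading to \eqref{alphabound} and \eqref{betabound} shows that $Q_{\varepsilon/2}(\langle X,e_j\rangle)\le \alpha_j\le Q_{3\varepsilon/2}(\langle X,e_j\rangle)$ and the analogous sandwich for $\beta_j$ hold with probability $1-4e^{-c_1 n_0\varepsilon}$. A union bound over $j\in[d]$ yields the $1-4d\,e^{-c_1 n_0\varepsilon}$ factor in the theorem. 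I would condition on this event for the remainder.

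Next, I would split $\|\hat\mu-\mu\|_2\le \|\hat\mu-\mathbb{E}_\tau\hat\mu\|_2+\|\mathbb{E}_\tau\hat\mu-\mu\|_2$, where $\mathbb{E}_\tau$ denotes expectation with respect only to the dithers $\{\tau_i\}_{i=n_0+1}^n$. For the bias $\|\mathbb{E}_\tau\hat\mu-\mu\|_2$, the identity \eqref{expdither} holds coordinate-wise, so $\mathbb{E}_\tau\hat\mu_j=\mathbb{E}\big[\phi_{\alpha_j,\beta_j}(\langle X,e_j\rangle)\mid \calX_{\rm (f)}\big]$, and the univariate estimate \eqref{deviation} can be applied to each coordinate. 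Taking the $\ell_2$ norm over $j\in[d]$ produces the first line of the stated bound. This step is purely bookkeeping; no additional probabilistic content is needed beyond the quantile event above.

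For the concentration $\|\hat\mu-\mathbb{E}_\tau\hat\mu\|_2$, I would mirror the argument in Theorem \ref{thm:onebit_ddim_without_robust}: conditionally on $\{X_i\}$, the entries of $\diag(\Delta)\dot X_i$ are independent and bounded by $\Delta_j$, hence by Proposition 2.7.1 of Vershynin the random vector $\diag(\Delta)\dot X_i$ is subgaussian with $\|\langle \diag(\Delta)\dot X_i,v\rangle\|_{\psi_2}^2\lesssim \sum_j \Delta_j^2 v_j^2$. Its covariance is the diagonal matrix with entries $\Delta_j^2$ (up to the bias correction, which is negligible here), so the standard subgaussian empirical-mean bound gives, with probability at least $1-\tfrac{2}{d}$,
\[
\|\hat\mu-\mathbb{E}_\tau\hat\mu\|_2\ \lesssim\ \sqrt{\tfrac{\sum_j\Delta_j^2}{n}}+\max_{j\in[d]}\Delta_j\sqrt{\tfrac{\log d}{n}}.
\]
Substituting $\Delta_j\le \tfrac12[Q_{1-\varepsilon/2}(\langle X,e_j\rangle)-Q_{\varepsilon/2}(\langle X,e_j\rangle)]$ (valid on the quantile event) yields the second and third lines of the theorem.

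For the subgaussian specialization, I would apply the single-coordinate estimates \eqref{calEXb} and \eqref{QMQb} with $K$-parameter $K\sqrt{\Sigma_{jj}}$ for each $j$. Summing squared bias contributions produces a $\sqrt{\tr(\Sigma)}\,\varepsilon\sqrt{\log(1/\varepsilon)}$ factor; the $\sqrt{\sum_j\Delta_j^2/n}$ term becomes $K\sqrt{\tr(\Sigma)\log(1/\varepsilon)/n}$; and $\max_j\Delta_j$ is bounded by $K\sqrt{\|\Sigma\|\log(1/\varepsilon)}$ using $\max_j\Sigma_{jj}\le \|\Sigma\|$, contributing $K\sqrt{\|\Sigma\|\log(1/\varepsilon)\log d/n}$. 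Choosing $\varepsilon\asymp 1/\sqrt n$ and $n_0\asymp \sqrt n\log d$ makes the quantile event hold with probability at least $1-\tfrac1d$ and balances the three contributions into the claimed form \eqref{hatmul2sg}. The main obstacle I anticipate is the split of $\max_j\Delta_j$ into a $\sqrt{\|\Sigma\|\log d}$ factor (as opposed to the pessimistic $\sqrt{\tr(\Sigma)}\sqrt{\log d}$); getting this right is what produces the clean $\sqrt{(\tr(\Sigma)+\|\Sigma\|\log d)/n}$ dependence rather than a loose dimension-dependent bound.
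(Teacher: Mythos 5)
Your proposal follows the same route as the paper's proof: a union bound over coordinates for the empirical quantiles, a bias/concentration split, the coordinate-wise univariate bias estimate from Theorem~\ref{thm:real_corfree}, a conditional-independence argument to establish coordinate-wise subgaussianity of $\diag(\Delta)\dot X_i$, the standard subgaussian empirical-mean bound with confidence $\delta\asymp d^{-1}$, and finally the subgaussian specialization via (\ref{calEXb})--(\ref{QMQb}) applied per coordinate with $\Sigma_{jj}$ in place of $\sigma^2$. One small imprecision worth flagging: you define $\mathbb{E}_\tau$ as expectation over the dithers only, and then assert $\mathbb{E}_\tau\hat\mu_j=\mathbb{E}\big[\phi_{\alpha_j,\beta_j}(\langle X,e_j\rangle)\mid\calX_{\rm (f)}\big]$; in fact $\mathbb{E}_\tau\hat\mu_j=\frac{1}{n-n_0}\sum_i\phi_{\alpha_j,\beta_j}(\langle X_i,e_j\rangle)$, which still fluctuates in the $X_i$'s. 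The decomposition that actually closes (and that your own concentration step delivers, since the $\psi_2$ bound you compute is uniform over the conditioning and hence yields the unconditional one) is around $\mathbb{E}[\hat\mu\mid\calX_{\rm (f)}]$, i.e.\ the conditional expectation over both $\{X_i\}_{i>n_0}$ and $\{\tau_i\}_{i>n_0}$ given the held-out samples. With that correction your argument is exactly the paper's.
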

One can see that for unquantized estimators with subgaussian data, the best one can hope for is $O(\sqrt{\tr(\Sigma)/n + \|\Sigma\|\log d/n})$ once we choose the confidence level to be $\delta \asymp d^{-1}$. Thus our estimate is optimal up to a mild $\log(n)$ factor.

\begin{proof}
We apply the triangle inequality to split the error estimate into bias and concentration terms as before:
\begin{align}\label{hdtri}
    \|\hat{\mu}-\mu\|_2\le \|\mathbb{E}\hat{\mu}-\mu\|_2 + \|\hat{\mu}-\mathbb{E}\hat{\mu}\|_2. 
\end{align}
To control the bias, we fix an index $j\in[d]$ and apply the Chernoff bound to re-iterate the arguments in the proof of Theorem \ref{thm:real_corfree}. In particular, we obtain that
    \begin{gather}
        \label{alphajbound}Q_{\varepsilon/2}(\langle X_i,e_j\rangle)\le\alpha_j\le Q_{3\varepsilon/2}(\langle X_i,e_j\rangle),\\\label{betajbound}
        Q_{1-3\varepsilon/2}(\langle X_i,e_j\rangle)\le \beta_j\le Q_{1-\varepsilon/2}(\langle X_i,e_j\rangle)
    \end{gather}
with probability at least $1-4\exp(-c_1\varepsilon n_0)$. Taking a union bound over all indexes $j\in [d]$, we conclude that the events in \Cref{alphajbound,betajbound} are likely, that is, with probability at least $1-4d\exp(-c_1\varepsilon n_0)$ all the events hold simultaneously for all indexes. Consequently, we have
    \begin{align}
        \label{Deltajbb}
        \Delta_j \le \frac{1}{2}\big[Q_{1-\varepsilon/2}(\langle X_i,e_j\rangle)-Q_{\varepsilon/2}(\langle X_i,e_j\rangle)\big].
    \end{align}
Next, we can re-iterate the arguments used to derive \Cref{deviation} to obtain that for every $j\in [d]$,
\begin{align*}
        |\mathbb{E}(\hat{\mu}_j)-\mu_j|\le\calE(\langle \overline{X},e_j\rangle;3\varepsilon/2) + 3\varepsilon\max\big|\big\{Q_{1-\varepsilon/2}(\langle \overline{X},e_j\rangle),Q_{3\varepsilon/2}(\langle \overline{X},e_j\rangle)\big\}\big|,
\end{align*}
which implies
\begin{align}\label{hddevi}
        \|\mathbb{E}\hat{\mu}-\mu\|_2 \le \left(\sum_{j=1}^d 2\calE(\overline{X}_j;2\varepsilon)^2+18\varepsilon^2|\big\{Q_{1-\varepsilon/2}(\langle \overline{X},e_j\rangle),Q_{3\varepsilon/2}(\langle \overline{X},e_j\rangle)\big\}\big|^2\right)^{1/2}.  
\end{align}
    It remains to control the concentration term $\|\hat{\mu}-\mathbb{E}\hat{\mu}\|_2$ which can be written as
    \begin{align}\nn
        &\|\hat{\mu}-\mathbb{E}\hat{\mu}\|_2 = \left\|\frac{1}{n-n_0}\sum_{i=n_0+1}^n\diag(\Delta)\big[\dot{X}_i
        -\mathbb{E}\dot{X}_i \big]\right\|_2 \\
        &= \sup_{v\in \mathbb{S}^{d-1}}\left\langle \diag(\Delta)v,\frac{1}{n-n_0}\sum_{i=n_0+1}^n\big[\dot{X}_i-\mathbb{E}\dot{X}_i\big]\right\rangle\\
        &= \sup_{v\in \diag(\Delta)\mathbb{S}^{d-1}}\left\langle v,\frac{1}{n-n_0}\sum_{i=n_0+1}^n\big[\dot{X}_i-\mathbb{E}\dot{X}_i\big]\right\rangle.\nn
    \end{align}
    We condition on $\{X_i\}_{i=n_0+1}^n$, $\alpha$, $\beta$ and utilize the randomness of $\{\tau_i\}_{i=n_0+1}^n\stackrel{iid}{\sim}{ \scrU}[-1,1]^d$, then the entries of $\dot{X}_i=\sign(X_i-\mu_1+\diag(\Delta)\tau_i)$ are independent. Therefore, for any $v_1,v_2\in \diag(\Delta)\mathbb{S}^{d-1}$, we have
    \begin{align*}
        &\left\|\left\langle v_1-v_2,\frac{1}{n-n_0}\sum_{i=n_0+1}^n\big[\dot{X}_i-\mathbb{E}\dot{X}_i\big]\right\rangle\right\|_{\psi_2}^2 
        \\
        & \le \frac{c_2}{(n-n_0)^2}\sum_{i=n_0+1}^n\big\|\langle v_1-v_2,\dot{X}_i-\mathbb{E}\dot{X}_i\rangle\big\|_{\psi_2}^2 \le \frac{c_3\|v_1-v_2\|_2^2}{n-n_0}.
    \end{align*}
   By the standard estimate for the empirical mean, we have
    \begin{align}
    \label{eq:estimate_concentration_last}
        &\|\hat{\mu}-\mathbb{E}\hat{\mu}\|_2 \le c_5\sqrt{\frac{\sum_{j=1}^d\Delta_j^2 + t^2 \max_{j\in[d]}\Delta_j^2}{n-n_0}}\\
        & \stackrel{(\ref{Deltajbb})}{\le} \sqrt{\frac{\sum_{j=1}^d[Q_{1-\varepsilon/2}(\langle X_i,e_j\rangle)-Q_{\varepsilon/2}(\langle X_i,e_j\rangle)]^2}{n}}\\
        &~~+ t\frac{\max_{j\in[d]}[Q_{1-\varepsilon/2}(\langle X_i,e_j\rangle)-Q_{\varepsilon/2}(\langle X_i,e_j\rangle)]}{\sqrt{n}},
    \end{align}
    with probability at least $1-2\exp(-t^2)$. We set $t=\sqrt{\log d}$ to obtain that
    \begin{align}
    \label{hdconcen}
        &\|\hat{\mu}-\mathbb{E}\hat{\mu}\|_2\\
        &\le c_6 \sqrt{\frac{\sum_{j=1}^d[Q_{1-\varepsilon/2}(\langle X_i,e_j\rangle)-Q_{\varepsilon/2}(\langle X_i,e_j\rangle)]^2}{n}}\\
        &+ \max_{j\in[d]}[Q_{1-\varepsilon/2}(\langle X_i,e_j\rangle)-Q_{\varepsilon/2}(\langle X_i,e_j\rangle)]\sqrt{\frac{\log d}{n}}
    \end{align}
    holds with probability at least $1-\frac{2}{d}$. Substituting (\ref{hddevi}) and (\ref{hdconcen}) into (\ref{hdtri}) yields the claim.

    The subgaussian case is somewhat analogous to (\ref{calEXb})--(\ref{QMQb}). Indeed, let $\Sigma_{jj}$ to be the $j$-th diagonal entry of the covariance matrix of $X$  and notice that
    \begin{align*}
        &\varepsilon\max\big|\big\{-Q_{1-\varepsilon/2}(\langle \overline{X},e_j\rangle),Q_{3\varepsilon/2}(\langle \overline{X},e_j\rangle)\big\}\big|+\calE(\langle \overline{X},e_j\rangle;2\varepsilon) \\
        &\le c_7  \varepsilon\sqrt{\Sigma_{jj}\log(1/\varepsilon)}\\
        &\text{and that,}\\
        &Q_{1-\varepsilon/2}(\overline{X}_j)-Q_{\varepsilon/2}(\overline{X}_j)\le c_8  \sqrt{\Sigma_{jj}\log(1/\varepsilon)}.
    \end{align*}
Substituting them into \Cref{eq:estimate_concentration_last}, we have
    \begin{align*}
        \|\hat{\mu}-\mu\|_2\le c_9\left(\varepsilon\sqrt{\log(1/\varepsilon)\tr(\Sigma)}+\sqrt{\frac{\log(1/\varepsilon)\tr(\Sigma)+\log(d)\log(1/\varepsilon)\|\Sigma\|}{n}} \right)
    \end{align*}
    with probability at least $1-4d\exp(-c_1\varepsilon n_0)-\frac{2}{d}$. Now  
    setting $n_0=\sqrt{n}\log d$ and $\varepsilon=\frac{c_{11}}{\sqrt{n}}$ for some large enough $c_{11}$, we obtain
    that 
    \begin{align}
        \|\hat{\mu}-\mu\|_2 \le c_{12}\sqrt{\frac{\log(n)\tr(\Sigma)+\|\Sigma\|\log(n)\log(d)}{n}} 
    \end{align}
    holds with probability at least $1-\frac{6}{d}$. By re-scaling $d$ and the absolute constants, we finish the proof.
\end{proof}

As already pointed at the end of Section \ref{sec:fullquan},  empirical mean estimators do not seem to enjoy fast convergence rate when the data is contaminated with adversarial noise. To bypass this issue, we combine the estimator proposed by Depersin and Lecue with the sampling strategy in the one-dimensional case to avoid too many corrupted samples among the $n_0$ samples we use to compute the empirical quantiles. 

Recall that we choose $n_0$ samples $Y_1,\ldots,Y_{n_0}$  uniformly at random without replacement among a total of $n$ samples $X_1,\ldots,X_n$. Next, we repeat the same procedure as in Theorem \ref{thm43prequan},
\begin{equation*}
\alpha_j = \langle Y,e_j\rangle^{*}_{\varepsilon n_0} \quad \text{ and} \quad \beta_j=\langle Y,e_j\rangle^*_{(1-\varepsilon)n_0} ,
\end{equation*}
and compute the quantities $\alpha=(\alpha_1,\cdots,\alpha_d)^\top$, $\beta=(\beta_1,\cdots,\beta_d)^\top$, $\mu_1$ and $\Delta$ as before. The quantization scheme also remains the same
\begin{align}
    \dot{X}_i = \sign(X_i-\mu_1 + \diag(\Delta)\tau_i),
\end{align}
however the estimator changes. Let $\widehat{\mu}_{DL}$ be the estimator from Proposition \ref{prop:depersin_lecue} applied to the samples $\diag(\Delta)\dot{X}_1,\cdots,\diag(\Delta)\dot{X}_n$ and consider the estimator given by
\begin{align}
    \widehat{\mu} = \widehat{\mu}_{DL} +\mu_1.
\end{align}

\begin{theorem}[Multivariate with corruption] 

\label{thm:partial_multivariate_corruption}
There exist absolute constants $C,c_1,c_2,c_3>0$ for which the following holds. Assume that an $\eta$-fraction of the sample was corrupted for some $\eta < 1/8$.  Consider the above quantization and estimation procedure with parameters $n_0\le \frac{n}{2}$ and $\varepsilon>0$. Then, for $\varepsilon\ge 8\eta$, with probability at least $1-d\exp(-c_1\varepsilon n_0)-\frac{1}{d}$, we have
\begin{align*}
        \|\hat{\mu}-\mu\|_2&\le c_2\left(\sum_{j=1}^d \calE(\overline{X}_j;2\varepsilon)^2+\varepsilon^2\max\big\{-Q_{1-\varepsilon/2}(\langle \overline{X},e_j\rangle),Q_{3\varepsilon/2}(\langle \overline{X},e_j\rangle)\big\}^2\right)^{1/2}\\
        &+c_2\sqrt{\frac{\sum_{j=1}^d[Q_{1-\varepsilon/2}(\langle X_i,e_j\rangle)-Q_{\varepsilon/2}(\langle X_i,e_j\rangle)]^2}{n}}\\
        &+c_2\max_{j\in[d]}[Q_{1-\varepsilon/2}(\langle X_i,e_j\rangle)-Q_{\varepsilon/2}(\langle X_i,e_j\rangle)]\sqrt{\frac{\log d}{n}+\eta}.
    \end{align*}
     Specifically, for $n_0=\sqrt{n}\log d$ and $\varepsilon=\max\{8\eta,\frac{c_{3}}{\sqrt{n}}\}$, if $X$ is $K$- subgaussian then with probability at least $1-\frac{1}{d}$,
    \begin{align}
        \|\hat{\mu}-\mu\|_2 \le CK \log(n) \sqrt{\frac{\tr(\Sigma)+\|\Sigma\|\log(d)}{n}+\|\Sigma\|\eta}.
    \end{align}
\end{theorem}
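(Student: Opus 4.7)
The plan is to combine the corruption-robust sub-sampling trick from Theorem \ref{thm43prequan} (which keeps the empirical quantiles $\alpha_j,\beta_j$ close to the population quantiles even when a fraction $\eta$ of the whole sample is corrupted) with the Depersin--Lecué estimator from Proposition \ref{prop:depersin_lecue} applied to the quantized bits, exactly in the spirit of Theorem \ref{thm:onebit_ddim_with_robust}. The crucial structural observation is that $\mu_1$ and $\Delta$ are measurable with respect to $\{Y_1,\ldots,Y_{n_0}\}$, so conditionally on them, the vectors $\{\diag(\Delta)\dot X_i\}_{i=n_0+1}^n$ are i.i.d.\ copies of a bounded random vector, of which at most an $\eta n/(n-n_0)=O(\eta)$-fraction are corrupted; this is exactly the setting required by Proposition \ref{prop:depersin_lecue}.

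The first step is coordinate-wise quantile control. Since $\{i_1,\ldots,i_{n_0}\}$ are drawn uniformly without replacement, the number of corrupted samples among $\{Y_\ell\}$ is hypergeometric with mean $\eta n_0$. Repeating the stochastic-domination argument from the proof of Theorem \ref{thm43prequan}, when $\varepsilon\ge 8\eta$ this number is at most $(\varepsilon/4)n_0$ with probability $1-\exp(-c\varepsilon n_0)$. A Chernoff bound applied to the uncorrupted sub-sample, followed by a union bound over $j\in[d]$, then yields
\[
Q_{\varepsilon/2}(\langle X,e_j\rangle)\le \alpha_j\le Q_{2\varepsilon}(\langle X,e_j\rangle),\qquad Q_{1-2\varepsilon}(\langle X,e_j\rangle)\le \beta_j\le Q_{1-\varepsilon/2}(\langle X,e_j\rangle)
\]
simultaneously for all $j$, with overall failure probability $O(d\exp(-c\varepsilon n_0))$. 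In particular $\Delta_j\le \tfrac12[Q_{1-\varepsilon/2}(\langle X,e_j\rangle)-Q_{\varepsilon/2}(\langle X,e_j\rangle)]$.

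The second step is to apply Proposition \ref{prop:depersin_lecue}. The coordinate-wise identity \eqref{eq:trimmed=dithering} gives $\mathbb{E}[\diag(\Delta)\dot X\mid \Delta,\mu_1]=\mathbb{E}[\phi_{\alpha,\beta}(X)\mid\alpha,\beta]-\mu_1$, so $\widehat\mu_{DL}+\mu_1$ targets $\mathbb{E}[\phi_{\alpha,\beta}(X)]$. For the covariance $\Sigma^\bullet$ of $\diag(\Delta)\dot X$, the bound $|\Delta_j\dot X_j|\le\Delta_j$ gives $\tr(\Sigma^\bullet)\le\sum_j\Delta_j^2$, and reproducing the conditional-subgaussianity computation from the proof of Theorem \ref{thm:onebit_ddim_without_robust} (entries of $\dot X$ are independent given $X$, hence $\|\langle v,\dot X\rangle\|_{\psi_2}^2\lesssim\sum_j v_j^2$) yields $\|\Sigma^\bullet\|_{\mathrm{op}}\lesssim\max_j\Delta_j^2$. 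Substituting these into Proposition \ref{prop:depersin_lecue} at confidence $\delta\asymp 1/d$ and with corruption fraction $O(\eta)$ gives
\[
\bigl\|\widehat\mu_{DL}-\mathbb{E}[\diag(\Delta)\dot X]\bigr\|_2\lesssim \max_j\Delta_j\sqrt{\tfrac{\log d}{n}+\eta}+\sqrt{\tfrac{\sum_j\Delta_j^2}{n}},
\]
and plugging in the $\Delta_j$ bounds from the first step recovers the last two lines of the advertised estimate.

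Finally, the residual bias $\|\mathbb{E}[\phi_{\alpha,\beta}(X)]-\mu\|_2$ is handled coordinate-by-coordinate by the same Cauchy--Schwarz/Chebyshev truncation argument used in Theorems \ref{thm:real_corfree} and \ref{thm:partial_multivariate_no_corruption}; taking $\ell_2$ over $j$ produces the $\mathcal{E}(\overline X_j;2\varepsilon)^2$ and $\varepsilon^2\max\{\cdots\}^2$ terms inside the square root. For the subgaussian specialization, inserting the coordinate-wise versions of \eqref{calEXb}--\eqref{QMQb} (with $K\sqrt{\Sigma_{jj}}$ replacing $K$), then choosing $n_0=\sqrt{n}\log d$ and $\varepsilon=\max\{8\eta,c_3/\sqrt n\}$, and absorbing the logarithms via $\log(1/\varepsilon)=O(\log n)$ as at the end of Theorem \ref{thm:corr11}, yields the announced subgaussian rate. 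The main obstacle in this plan is the operator-norm estimate $\|\Sigma^\bullet\|_{\mathrm{op}}\lesssim\max_j\Delta_j^2$: a naive coordinate-wise bound only gives $\sum_j\Delta_j^2$, which would introduce a spurious $\sqrt d$ factor in front of $\sqrt\eta$ and destroy the announced rate. Circumventing this requires the conditional-independence/subgaussianity trick already exploited in Theorem \ref{thm:onebit_ddim_without_robust}, which is precisely why Theorem \ref{thm43prequan} cannot be invoked as a black box one coordinate at a time.
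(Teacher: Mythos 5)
Your proposal follows essentially the same route as the paper's proof: coordinate-wise quantile control via the hypergeometric/Chernoff argument of Theorem \ref{thm43prequan} with a union bound over $j\in[d]$, then the bias/concentration split with the Depersin--Lecu\'e estimator of Proposition \ref{prop:depersin_lecue} applied to $\{\diag(\Delta)\dot X_i\}$ at confidence $\delta\asymp 1/d$ and corruption fraction $O(\eta)$, and finally the same subgaussian specialization. Your handling of the covariance of $\diag(\Delta)\dot X$ via the conditional-independence/subgaussianity argument is in fact a slightly more careful justification of the operator-norm and trace bounds that the paper simply asserts by declaring the covariance diagonal with entries $\Delta_j^2$.
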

\begin{proof}
We apply the same argument used to derive estimates analogous to \eqref{eq:bound_alpha_robust} and \eqref{eq:bound_beta_robust} for $\langle X,e_1\rangle,\ldots,\langle X,e_d\rangle$. Thus, by a simple union bound, we obtain that with probability at least $1-4de^{-\varepsilon n_0}-de^{-0.1\varepsilon n_0}$, for every $i\in [d]$,
\begin{align*}
&Q_{1-2\varepsilon}(\langle X,e_i\rangle) \le \beta_i \le Q_{1-\varepsilon/2}(\langle X,e_i\rangle).\\
&   Q_{\varepsilon/2}(\langle X,e_i\rangle) \le \alpha_i \le Q_{2\varepsilon}(\langle X,e_i\rangle)\\
& Q_{\varepsilon/2}(\langle X,e_i\rangle)\le \Delta_i \le Q_{1-\varepsilon/2}(\langle X,e_j\rangle).
\end{align*}
For the rest of the proof, we condition on the event where the estimates above hold. By triangle inequality
\begin{align*}
\|\widehat{\mu}-\mu\|_2&\le \|\widehat{\mu}-\mu_1-\E \diag(\Delta)\dot{X}\| + \|\mu - \E  \diag(\Delta)\dot{X}+\mu_1\|_2\\
&= \|\widehat{\mu}_{DL}-\E \diag(\Delta)\dot{X}\| + \|\mu - (\E\diag(\Delta)\dot{X}+\mu_1)\|_2,
\end{align*}
where the second term can be estimated using the same argument as in the proof of Theorem \ref{thm:partial_multivariate_no_corruption}. To handle the first term on the right-hand side, the first observation is to notice that $\diag(\Delta) \dot{X}$ is a random vector whose covariance matrix is a diagonal matrix with entries $\Delta_1^2,\ldots,\Delta_d^2$. The second observation is that the mean estimator $\widehat{\mu}_{DL}$ has an input consisting on $n-n_0$ samples, where at most $\eta n$ where contaminated. Thus, under the regime $n_0=o(n)$, no more than $2\eta(n-n_0)$ samples are corrupted, provided that $n$ is sufficiently large. Therefore, by Proposition \ref{prop:depersin_lecue} for the choice of confidence interval $\delta \asymp d^{-1}$, we obtain that with probability $1-d^{-1}$
\begin{align*}
\|\widehat{\mu}_{DL}-\E \diag(\Delta) \dot{X}\|_2\le C \left(\max_{j\in [d]}\Delta_j\sqrt{\frac{\log(d)}{n}+\eta} + \sqrt{\frac{\sum_{j=1}^d \Delta_j^2}{n}}\right),
\end{align*}
for some absolute constant $C>0$. 
The proof now follows by recalling the estimates $Q_{\varepsilon/2}(\langle X,e_i\rangle)\le \Delta_i \le Q_{1-\varepsilon/2}(\langle X,e_j\rangle)$ valid for all indexes. The second part just applies the same estimates for subgaussian distributions used at the end of Theorem \ref{thm:partial_multivariate_no_corruption}.
\end{proof}

\section{Numerical Examples}
\label{sec:experiments}
In this section, we present numerical experiments that support our theoretical findings. We focus on the estimators developed for the partial quantization regime in Section \ref{sec:partialquan}, as they are the most practically relevant. The only exception is we consider the distributed setting  in Figure \ref{fig5:robustmul}, where the goal is to provide a negative statement. 
Each data point of our results is obtained by averaging over $100$ independent trials.
The codes for reproducing the simulations are available here:
$$\href{https://github.com/junrenchen58/one-bit-mean}{\texttt{https://github.com/junrenchen58/one-bit-mean}}$$

We start with the noiseless univariate setting in Theorem \ref{thm:real_corfree}. Specifically, we test the mean estimation from $n$ i.i.d. $\calN(100,1)$ samples. In our estimator, we roughly follow the parameters in the theorem to choose $n_0=\lceil\sqrt{n}\rceil$ and $\varepsilon=\sqrt{2/n}$. When there is no quantization, we consider the empirical mean $\bar{\mu}=\frac{1}{n}\sum_{i=1}^nX_i$ as the benchmark for the performance of our proposed estimator. We plot the errors obtained when we use the proposed (quantized) estimator $\hat{\mu}$ in Equation (\ref{hatmu}) and when we use the optimal estimator $\bar{\mu}$, for different sample sizes $n\in\{50,100,200,300,400,500\}$ in Figure \ref{fig1:1d}. Remarkably, under the same sample size, the errors obtained with the quantized (one-bit) mean estimator $\hat{\mu}$ are empirically no greater than twice the error obtained using the optimal (unquantized) estimator $\bar{\mu}$, that is, we observe empirically that $|\hat{\mu}-\mu|\le 2|\bar{\mu}-\mu|$. In addition, we  note that our estimator performs well when the mean is $\mu=100$, which is much larger than the variance $1$, confirming that the methods proposed in Section \ref{sec:partialquan} do not depend on the location of the mean. 

\begin{figure}[ht!]
    \centering
    \includegraphics[width=0.65\textwidth]{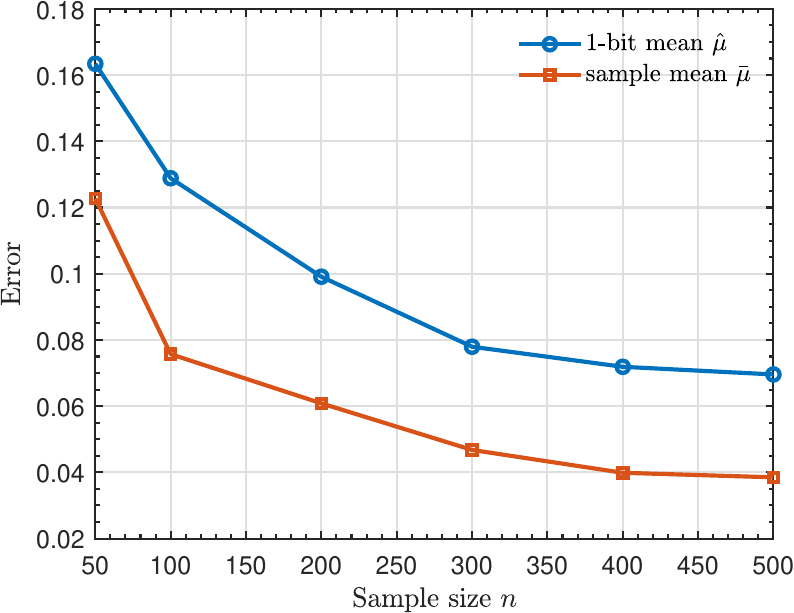}
    \caption{Corruption-free univariate mean estimation.}
    \label{fig1:1d}
\end{figure}

Next, we proceed to the multivariate corruption-free setting in Theorem \ref{thm:partial_multivariate_no_corruption}. We test the estimation of the mean of i.i.d. $\calN(100\cdot \mathbf{1}_d,\Sigma_{\rm Toe})$ samples, where $\Sigma_{\rm Toe}$ is a Toeplitz covariance matrix with $(i,j)$-th entry being $\frac{1}{2^{|i-j|}}$. We test $d=30$ and $n\in \{1200,1500,1800,2100,2400\}$. Similarly, we compare the one-bit estimator $\hat{\mu}$ in Equation (\ref{multi1besti}) with parameters $n_0=\lceil 2\sqrt{n}\rceil$ and $\varepsilon=\frac{3}{\sqrt{n}}$ to the empirical mean $\bar{\mu}=\frac{1}{n}\sum_{i=1}^nX_i$. The results are reported in Figure \ref{fig2:hd}.

\begin{figure}[ht!]
    \centering
    \includegraphics[width=0.65\textwidth]{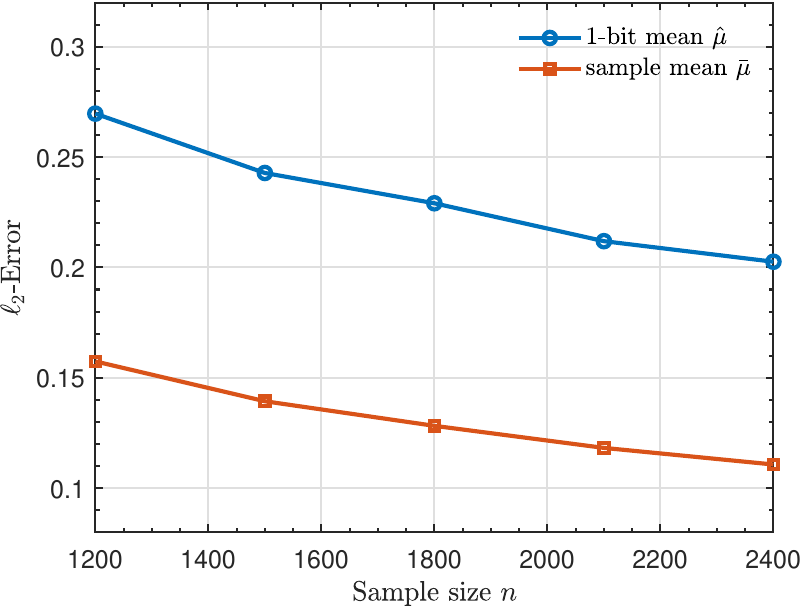}
    \caption{Corruption-free multivariate mean estimation under isotropic covariance.}
    \label{fig2:hd}
\end{figure}

A more challenging regime addressed by our results is the high-dimensional regime, when the dimension $d$ is comparable to the sample size $n$, but the covariance matrix $\Sigma$ presents a low-rank feature, for example, its trace increases slowly with the dimension. In particular, the convergence rate derived in Equation (\ref{hatmul2sg}) scales with the trace of the covariance matrix $\tr(\Sigma)$ up to a logarithmically dependence on the dimension $d$. To capture this regime, we test the mean estimation problem when $X\sim \calN(100\cdot \mathbf{1}_d,\Sigma_d)$ for $d=n/10$ and sample sizes $n\in\{200,400,600,800,1000\}$. The ambient dimension $d$ and the sample size $n$ increase simultaneously, and the ratio $\frac{d}{n}$ remains constant. For a fixed value of $d$, we generate a covariance matrix $\Sigma_d$ by sampling from
$$\Sigma_d = O^T \diag([1^{-2},2^{-2},3^{-2},\cdots,d^{-2}]) O,$$
where $O$ is sampled from the Haar measure of the group of orthogonal transformations. 

Consequently, $\tr(\Sigma_d) = \sum_{i=1}^d i^{-2}$ increases with $d$ slowly; in our setting, $\tr(\Sigma_{d=20})=1.5962$ and $\tr(\Sigma_{d=100})=1.6350$ are very close. The results shown in Figure \ref{fig3:hd} demonstrate the (almost) dimension-free behavior of our one-bit estimator, in particular, the errors decrease with $n$ in spite of the simultaneously increasing $d$. Notably, as in the previous two simulations, the errors of our one-bit estimator typically do not exceed twice of the errors of the sample mean. 

\begin{figure}[ht!]
    \centering
    \includegraphics[width=0.65\textwidth]{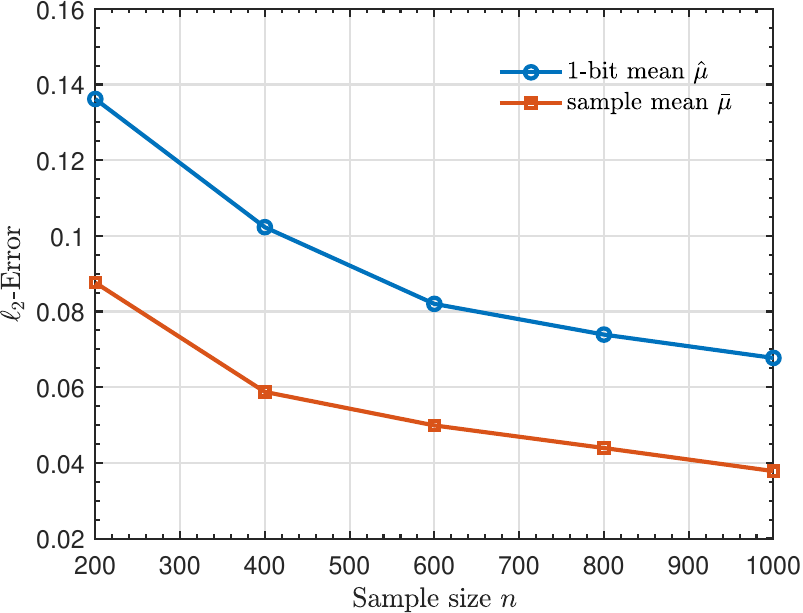}
    \caption{Multivariate mean estimation with low trace $\tr(\Sigma)$.}
    \label{fig3:hd}
\end{figure}

Another important feature of our method is the robustness to an adversarial corruption, which was not considered in prior works of mean estimation under quantization. To illustrate it empirically, we consider the more delicate setting of Theorem \ref{thm43prequan}, where the $\lfloor\eta n\rfloor$ adversarial corruptions occur before drawing $\calX_{\rm (f)}$.  We simulate mean estimation from $n=1000$ samples following $\calN(\mu=100,1)$, while specifically test the corruption that changes the largest $\eta n$ samples in $\{X_i\}_{i=1}^n$ to their symmetric values with respect to $\mu$, that is, changing $X_i$ to $2\mu-X_i$. Note that the sample size is relatively large in univariate mean estimation and therefore the noise term $O(\eta\sqrt{\log(1/\eta)})$ in the error rate in Equation (\ref{sgraterobust}) should dominate. In our estimator $\hat{\mu}$, we set $n_0=\lceil\sqrt{n_0}\rceil$ and draw the size-$n_0$ $\calX_{\rm (f)}$ uniformly without replacement. We also set $\varepsilon=\eta + 1/\sqrt{n}$ and compute $\varepsilon$ and $1-\varepsilon$ quantiles of $\calX_{\rm (f)}$. We shall compare our estimator with the (unquantized) trimmed mean \cite{lugosi2021trimmed} which computes the  $\varepsilon'$ and $1-\varepsilon'$ quantiles of $\{X_i\}_{i=1}^{\xi n}$, denoted respectively by $\alpha'$ and $\beta'$, and then estimates the mean via the remaining $(1-\xi)n$ samples $\{X_i\}_{i= \xi n + 1}^n$ as 
\begin{align}
    \bar{\mu}_{\rm trim}=\frac{1}{(1-\xi)n} \sum_{i=\xi n + 1}^n \phi_{\alpha'}^{\beta'}(X_i).\label{trimmedmeanesi}
\end{align}
We set $\varepsilon'=\eta + 1/\sqrt{n}$ as in our one-bit estimator. In the original paper \cite{lugosi2021trimmed}, the authors set $\xi=1/2$ and thus only use half of the samples to construct the estimator, yet in practice this may affect the convergence rate too much. In our experiment, we test trimmed mean with $\xi\in\{0.1,0.2,0.3,0.4,0.5\}$ and record the smallest error in each independent trial. We show the error curves under $\eta = 0.005:0.005:0.2$ in Figure \ref{fig4:robustuni}. Remarkably, the errors of our $\hat{\mu}$ are very close to $\bar{\mu}_{\rm trim}$. 

\begin{figure}[ht!]
    \centering
    \includegraphics[width=0.65\textwidth]{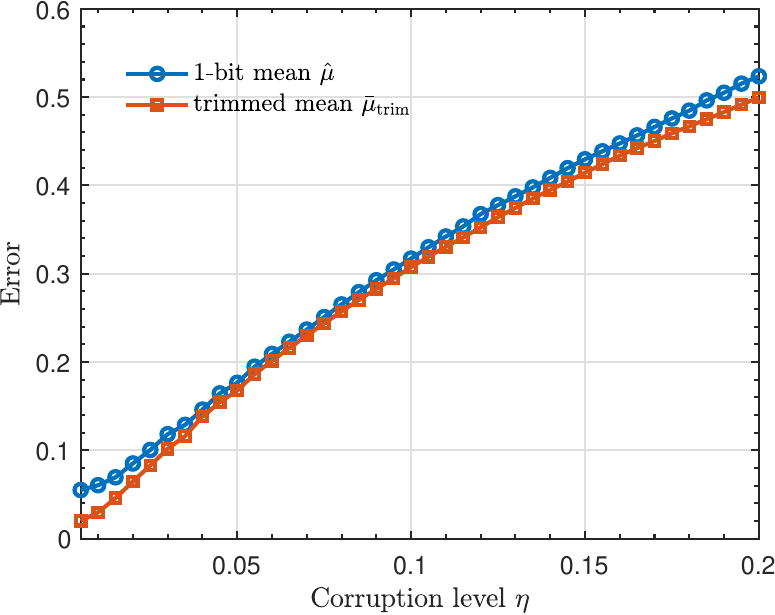}
    \caption{Robust univariate mean estimation.}
    \label{fig4:robustuni}
\end{figure}

The multivariate case under adversarial corruption is much more delicate. Indeed, the Theorems \ref{thm:onebit_ddim_with_robust} and \ref{thm:partial_multivariate_corruption} rely on the estimator proposed in \cite{depersin2022robust} to achieve a convergence rate of order $O(\sqrt{\|\Sigma\|\eta})$. Despite the algorithm proposed in their work enjoys a polynomial-time running time, it does not have an efficient code in practice. It might be the case that the constants in the theorem are too large for practical applications. We leave it as an important open problem.

To conclude, we provide experiments to illustrate that the natural choice of taking the empirical mean of the quantized samples does not enjoy fast rates under an adversarial contamination. We consider the fully quantized setting in Theorem \ref{thm:onebit_ddim_without_robust} for simplicity. We generate $n$ i.i.d. samples from $\calN(0,I_d)$ and set the dithering levels $\lambda_1=\cdots=\lambda_d=2$. We fix the ratio $\frac{d}{n}=\frac{1}{100}$ by testing $d=10:10:100$ under $n=100\cdot d$. Notice that the error rate in our result for the corruption-free regime scales as $\tilde{O}(\sqrt{d/n})$, and therefore the errors should roughly remain constant without corruption. Under corruption, as remarked after Theorem \ref{thm:onebit_ddim_without_robust}, a trivial argument controls the error increment as $O(\eta\sqrt{d})$. In our experiment, we test $\eta=0.05$ and $0.10$ under the corruption pattern that   uniformly selects $\eta n$ samples $X_i$'s and change them to $\tilde{X}_i:=X_i+\mathbf{1}_d$. The ``error v.s. dimension'' curves in Figure \ref{fig5:robustmul} confirm that the errors of the simple estimator in Theorem \ref{thm:onebit_ddim_without_robust} increase with $d$.

\begin{figure}[ht!]
    \centering
    \includegraphics[width=0.65\textwidth]{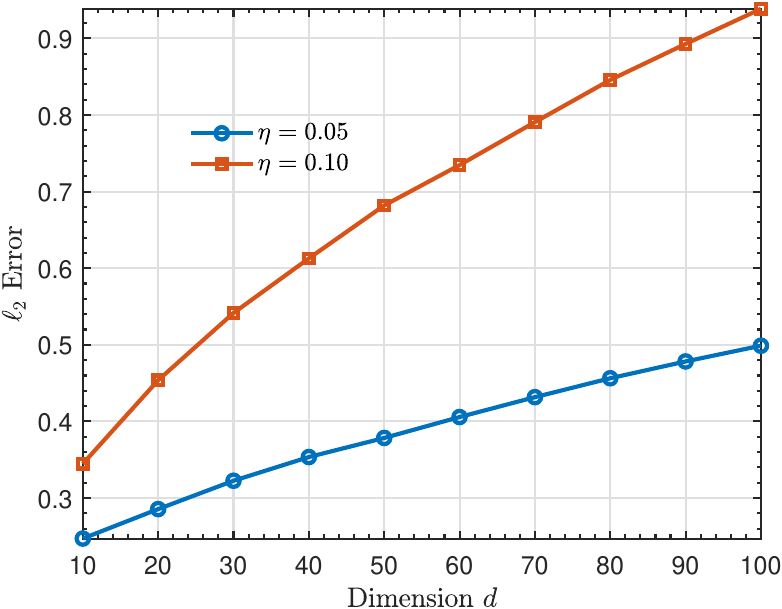}
    \caption{Empirical mean does not enjoy strong robustness.}
    \label{fig5:robustmul}
\end{figure}

\newpage
\subsubsection*{Acknowledgment}
P.A. is supported by NSF and Simons Collaboration on Theoretical Foundations of Deep Learning. J.C. is supported by a Novikov Postdoctoral Fellowship of the Department of Mathematics at the University of Maryland. 
Part of this work was done while J.C. was affiliated with The University of Hong Kong and visiting University of California, Irvine. J.C. is thankful to R. Vershynin and Y. He for the hospitality and stimulating discussions.

\end{document}